\documentclass{article}


\usepackage[nonatbib,final]{neurips_2023}
\usepackage[numbers]{natbib}




\usepackage[utf8]{inputenc} 
\usepackage[T1]{fontenc}    
\usepackage[hidelinks]{hyperref}       
\usepackage{url}            
\usepackage{booktabs}       
\usepackage{amsfonts}       
\usepackage{nicefrac}       
\usepackage{microtype}      

\usepackage[table, svgnames, dvipsnames]{xcolor}
\usepackage{makecell, cellspace, caption}
\usepackage{caption}
\usepackage{subcaption}
\usepackage{wrapfig}

\usepackage{algorithm}
\usepackage{algorithmic}

\usepackage{amsmath}
\usepackage{amssymb}
\usepackage{mathtools}
\usepackage{amsthm}
\usepackage{bm}
\usepackage{multirow}
\usepackage{stmaryrd}
\usepackage{bbm}
\usepackage{longtable}

\usepackage[capitalize,noabbrev]{cleveref}

\theoremstyle{plain}
\newtheorem{theorem}{Theorem}[section]
\newtheorem{proposition}[theorem]{Proposition}

\theoremstyle{definition}

\newtheorem{assumption}[theorem]{Assumption}
\theoremstyle{remark}

\DeclareMathOperator*{\argmax}{arg\,max}

\usepackage[textsize=tiny]{todonotes}

\title{Large Language Models Are Latent Variable Models: \\ Explaining and Finding Good Demonstrations for In-Context Learning}

\author{%
  Xinyi Wang$^1$, Wanrong Zhu$^1$, Michael Saxon$^1$, Mark Steyvers$^2$, William Yang Wang$^1$ \\
  $^1$Department of Computer Science, University of California, Santa Barbara\\
  $^2$Department of Cognitive Sciences, University of California, Irvine\\
  \texttt{\{xinyi\_wang, wanrongzhu, saxon\}@ucsb.edu},\\
  \texttt{msteyver@uci.edu}, \texttt{william@cs.ucsb.edu}
}

\begin{document}

\maketitle

\begin{abstract}

In recent years, pre-trained large language models (LLMs) have demonstrated remarkable efficiency in achieving an inference-time few-shot learning capability known as in-context learning. However, existing literature has highlighted the sensitivity of this capability to the selection of few-shot demonstrations. Current understandings of the underlying mechanisms by which this capability arises from regular language model pretraining objectives remain disconnected from the real-world LLMs. This study aims to examine the in-context learning phenomenon through a Bayesian lens, viewing real-world LLMs as latent variable models. On this premise, we propose an algorithm to select optimal demonstrations from a set of annotated data with a small LM, and then directly generalize the selected demonstrations to larger LMs. We demonstrate significant improvement over baselines, averaged over eight GPT models on eight real-world text classification datasets. We also demonstrate the real-world usefulness of our algorithm on GSM8K, a math word problem dataset. Our empirical findings support our hypothesis that LLMs implicitly infer a latent variable containing task information.
\footnote{Code: \url{https://github.com/WANGXinyiLinda/concept-based-demonstration-selection}.}

\end{abstract}

\section{Introduction} 
\label{sec:intro}

Transformer-based \cite{vaswani2017attention} pre-trained large language models (LLMs) have demonstrated significant advancements in a variety of natural language processing (NLP) tasks. As the size of these LLMs increases, they gain ``in-context learning'' capabilities, whereby the models achieve state-of-the-art (SOTA) or near-SOTA performance by conditioning on a small number of demonstration examples at inference time, without any need for updating model parameters \citep{brown2020language}. Below is an example input sequence for semantic analysis with in-context learning:

\texttt{Great movie. Positive.\textbackslash n The worst movie ever. Negative.\textbackslash n Can't wait to see the second movie!}

The first two lines are two demonstrations, and the third line is a test input. We expect an LLM to output the correct label \texttt{Positive} as a continuation.

In-context learning has been demonstrated to be an effective technique for a wide range of NLP tasks. However, it is sensitive to the choice, format, and even the order of the demonstrations used \citep{perez2021true,lu2022fantastically}. This makes achieving optimal performance with in-context learning a significant challenge, requiring real human effort to adjust the format and selection of demonstration examples. Heuristic solutions, such as selecting demonstrations based on the similarity between the demonstrations and test input \citep{liu-etal-2022-makes,su2022selective} have been proposed, but a comprehensive understanding of why certain demonstrations are effective while others are not remains elusive. Additionally, the mechanisms by which LLMs acquire in-context learning capabilities through training on natural text under the standard language model pre-training objective are not fully understood. Recent works on understanding in-context learning provide valuable insights and theoretical results \citep{chan2022data, akyurek2022learning, von2022transformers,jiang2023latent,hahn2023theory}, but are limited in scope, focusing on synthetic experiments to validate their hypotheses, while it remains unclear if these results generalize to LLMs pre-trained on real-world natural language data. 
\citet{xie2022an} introduced a prominent result providing a latent topic (concept) variable interpretation for in-context learning. They showed that the in-context learning predictor approaches the Bayes optimal predictor when the number of demonstrations approaches infinity, under the assumption that both the pre-training data distribution and task-specific data distribution are Hidden Markov Models (HMM). However, the assumption that the data generation process is Hidden Markovian makes extrapolation of the result to natural language questionable, and restricts empirical verification to synthetic data with toy models. 

We are inspired by this prior work and introduce a more general and natural explanation built on realistic assumptions, which gives rise to a practical demonstration selection algorithm. Our explanation is inspired by the generation process of a topic model, i.e. a simple latent variable model:
\begin{align*}
P(\bm{w}_{1:T}) = \int_\Theta P(\bm{w}_{1:T}|\bm{\theta}) P(\bm{\theta}) d\bm{\theta}
\end{align*}
Where $\bm{\theta} \in \Theta$ represents a potentially high dimensional topic/concept variable, $\Theta$ is the space of the topic/concept variable, and $\bm{w}_{1:T}$ refers to the token sequence of a piece of text. Note that the topic model here refers to the modern neural topic models \citep{miao2017discovering,pmlr-v48-miao16}.
On the other hand, generative LLMs model text data according to the general probabilistic decomposition:
\begin{align*}
P(\bm{w}_{1:T}) = \prod_{i=1}^T P(\bm{w}_i|\bm{w}_{i-1},...,\bm{w}_1)
\end{align*}
While in practice, LLMs generate new tokens based on all previous tokens, we investigate whether a simplified assumption similar to that of topic models can be made for LLMs:
\begin{align*}
P_M(\bm{w}_{t+1:T}|\bm{w}_{1:t}) = \int_{\Theta} P_M(\bm{w}_{t+1:T}|\bm{\theta}) P_M(\bm{\theta}|\bm{w}_{1:t}) d\bm{\theta}
\end{align*}
In this scenario, the generated tokens are assumed to be conditionally independent of previous tokens, given the latent topic (concept) variable that acts like an approximate sufficient statistic for the posterior information related to the prompt $\bm{w}_{1:t}$. For in-context learning, this concept variable includes format and task information. By conditioning on an appropriate latent concept variable, LLMs would generate the desired continuation with $P(\bm{w}_{t+1:T}|\bm{\theta})$. As LLMs do not explicitly learn a latent variable distribution like LDA-style topic models \cite{10.5555/944919.944937}, we can instead utilize this formulation under an Empirical Bayesian formulation inspired by \citet{lester2021power} to only approximate the optimal latent variable value for a desired task, using a small LLM (with less than 1B parameters), which is computationally efficient.


We empirically validate our explanation by selecting examples ($\bm{w}_{1:t}$ in the equations) that are most likely to infer the optimal latent variable value (those with the highest posterior probability $P(\bm{\theta}|\bm{w}_{t+1:T})$). We then directly use them as demonstrations for in-context learning with other larger LLMs (up to 175B parameters) and observed a significant performance improvement. The generalization of demonstrations between LLMs is likely a result of similar pre-training data distributions.

While our work is inspired by that of \citet{xie2022an}, our approach differs significantly in both theoretical analysis and experimental settings. Our main contributions are as follows:
\begin{itemize}
    \item \textbf{We assume a general data generation process} specified by a three-variable causal graph, without constraints on the distribution function or the number of demonstrations.
    \item \textbf{We prove under these realistic assumptions} that the in-context learning predictor can reach the Bayes optimal predictor with a finite number of demonstrations chosen using the latent concept variable. 
    \item \textbf{We introduce an efficient, practical demonstration selection algorithm} based on our theoretical results, which can select demonstrations using a small LLM and then directly generalize the demonstrations to other LLMs. The effectiveness of our algorithm is empirically validated using real-world LLMs on both text classification tasks and math word problems.
\end{itemize}

Our goal is to close the gap between theoretical understandings and real-world LLMs. To the best of our knowledge, our proposed latent variable explanation of in-context learning is the first Bayesian explanation that yields an effective algorithm in real-world scenarios.

\section{Theoretical Analysis}
\label{sec;prelim}

In in-context learning, the prompt $w_{1:t}$ is composed of several demonstrations and a test input. The generated tokens $w_{t+1:T}$ represent the model's prediction for the test input. 

\subsection{Notations and Problem Setting}

Suppose the objective of our task is to predict a discrete target variable $Y \in \mathcal{Y}$, given a token sequence $X \in \mathcal{X}$, where $\mathcal{X}$ is the space of all possible token sequences. $\bm{\theta} \in \Theta$ is a potentially high dimensional latent variable,
where $\Theta$ is the high dimensional space of the variable. Unlike the traditional topic model, $\bm{\theta}$ is not assumed to be discrete, but continuously distributed over $\Theta$. 
To define the data generation process, we posit the existence of an underlying causal relation between $X$, $Y$, and $\bm{\theta}$. We examine two potential directions of this causal relation, namely $X \shortrightarrow Y \shortleftarrow \bm{\theta}$ and $Y \shortrightarrow X \shortleftarrow \bm{\theta}$, which can be represented mathematically as the following structural equations:
\begin{align*}
    Y = f(X, \bm{\theta}, \bm{\epsilon}) &&  X = g(Y, \bm{\theta}, \bm{\epsilon})
\end{align*}
Here $\bm{\epsilon} \in \mathcal{E}$ is an independent noise variable, $f: \mathcal{X}\times\Theta\times\mathcal{E} \to \mathcal{Y}$ and $g: \mathcal{Y}\times\Theta\times\mathcal{E} \to \mathcal{X}$ are two deterministic functions.
Furthermore, we denote the joint data distribution by $X, Y, \bm{\theta} \sim P$, and assume that $Y$ is sampled from a uniform distribution over $\mathcal{Y}$. The distinction between these two directions is crucial, as it allows us to utilize the direction in which the child variable ($Y$ or $X$) is independent of the other variables, given its parents.

We hypothesize that the causal direction depends on the nature of the task. For instance, in the task of predicting the sentiment ($Y$) of a movie review ($X$), it is reasonable to assume that the opinion about the movie is formed before writing the review, thus making $Y$ the cause of $X$, along with the task concept of ``writing a passage to express one's opinion about the movie" ($\bm{\theta}$). Conversely, for the task of classifying whether a product review ($X$) is helpful to other customers ($Y$), it is the quality of the review ($X$) that cause other customers to upvote it ($Y$), along with the task concept of ``rating the helpfulness of this review" ($\bm{\theta}$). \textit{In the rest of the paper, we will focus on the $X \shortrightarrow Y \shortleftarrow \bm{\theta}$ direction and leave a detailed discussion of the other direction in the Appendix.}

Suppose we are interested in a task (e.g. semantic analysis) denoted by $d \in \mathcal{T}$, where $\mathcal{T}$ is the space of all possible tasks. We assume there is an injective function between $\mathcal{T}$ and $\Theta$. i.e. for each task $d$, there is a concept variable $\theta^d$, such that each data $(X^d, Y^d)$ sampled from task $d$ is generated by:
\begin{align*}
    Y^d = f(X^d, \theta^d, \bm{\epsilon}) 
\end{align*}
To perform in-context learning with an LLM (generically denoted by model label $M$), we condition on a fixed set of $k$ demonstration examples $(X^d_1, Y^d_1), (X^d_2, Y^d_2), ..., (X^d_k, Y^d_k)$ sampled from task $d$.

Following previous works \cite{min2022noisy,min2022rethinking}, as we are not using any instruction fine-tuned models, we do not include a task description in the prompt, with the aim of focusing on the examination of the demonstrations.
To naturally project $\mathcal{Y}$ into the token space $\mathcal{X}$, we define injective mappings $\tau^d: \mathcal{Y} \to \mathcal{X}$, which are typically defined by human understanding of the task $d$. e.g. for sentiment analysis, $\tau^d$ map positive class to the token ``positive" and negative class to the token ``negative". Additionally, a delimiter token $\bm{w}^{d}$ is defined, typically an empty space or a new line token, to separate the demonstrations when concatenated. We denote the LLM output probability of $X$, $Y$, and $\bm{\theta}$, with the aforementioned preprocessing applied, by $P_{M}^d$:
\begin{align*}
    P_M(\tau^d(Y)|X^d_1, \tau^d(Y^d_1), \bm{w}^{d}, ..., X^d_k, \tau^d(Y^d_k), \bm{w}^{d}, X) =P_{M}^d(Y|X^d_1, Y^d_1, ..., X^d_k, Y^d_k, X)
\end{align*}

\subsection{Problem Analysis and Theoretical Results}
\label{sec:theory}

Suppose a set of observed data sampled from task $d$, denoted as $\mathcal{D}^d$, is available, allowing for the selection of the $k$ most suitable demonstrations from it. For any incoming test example $X$, we have:
\begin{align}
    \label{eq:in-context-direct}
    P_{M}^d(Y|X^d_1, Y^d_1, ..., X^d_k, Y^d_k, X) = \int_{\Theta} P_M^d(Y|\bm{\theta}, X) P_M^d(\bm{\theta}|X^d_1, Y^d_1, ..., X^d_k, Y^d_k, X) d\bm{\theta}
\end{align}
Here, we assume the sampling of the test example is independent of the sampling of the demonstrations, so $Y$ is independent of the demonstrations given $\bm{\theta}$ and $X$. We also assume that the pre-trained data distribution $P_M^d$ is a suitable approximation of the assumed data distribution $P$:
\begin{assumption}
    \label{ass:1}
    Assume that $P_M(X) = P(X)$, and $P_M^d(Y|\bm{\theta}, X) \propto P(Y|\bm{\theta}, X)$ for $X \shortrightarrow Y \shortleftarrow \bm{\theta}$.
\end{assumption}
Note that the assumption that a large language model captures the true distribution of language is fairly common in the literature studying LLMs \cite{xie2022an,saunshi2021a,wei2021why}. With this assumption, we establish:
\begin{proposition}
    \label{prop:bayes-optimal-direct}
    If task $d$ follows the $X \shortrightarrow Y \shortleftarrow \bm{\theta}$ direction, then $\argmax_{y\in\mathcal{Y}} P_M^d(Y=y|\theta^d, X)$ is the Bayes optimal classifier.
\end{proposition}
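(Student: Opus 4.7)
The plan is to unpack what the Bayes optimal classifier should be for a task following the $X \shortrightarrow Y \shortleftarrow \bm{\theta}$ direction, and then show that the LLM's conditional distribution agrees with the true posterior up to a factor that is constant in $Y$, so that taking argmax yields the same prediction.

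First I would recall the standard fact that, under 0-1 loss, the Bayes optimal classifier is the mode of the true posterior of $Y$ given all available parents / conditioning information. Since task $d$ has $Y^d = f(X^d, \theta^d, \bm{\epsilon})$ with $\bm{\epsilon}$ independent of everything else, the distribution of $Y$ given $X$ is fully determined by the pair $(X, \theta^d)$, and the Bayes optimal classifier for task $d$ is therefore $y^{\star}(X) = \argmax_{y \in \mathcal{Y}} P(Y = y \mid \theta^d, X)$. This is the target classifier I want to match.

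Next I would invoke Assumption~\ref{ass:1}, which states $P_M^d(Y \mid \bm{\theta}, X) \propto P(Y \mid \bm{\theta}, X)$. The subtle point I need to justify is that the hidden proportionality constant depends only on $(\bm{\theta}, X)$ and not on $Y$; this is the content of writing $\propto$ on distributions of $Y$ for fixed $\bm{\theta}, X$ (the constant is essentially the mass the LLM places on tokens in the image of $\tau^d$, i.e.\ a renormalization over the label set $\mathcal{Y}$). Once this is in hand, I can set $\bm{\theta} = \theta^d$ and note that multiplying by a positive quantity independent of $y$ preserves the argmax, giving
\begin{align*}
\argmax_{y \in \mathcal{Y}} P_M^d(Y = y \mid \theta^d, X) = \argmax_{y \in \mathcal{Y}} P(Y = y \mid \theta^d, X) = y^{\star}(X).
\end{align*}

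The step I expect to require the most care is pinning down the quantifier on the proportionality in Assumption~\ref{ass:1}: I need to read it as ``for every fixed $\bm{\theta}, X$, the two distributions over $Y$ agree up to a normalization factor not depending on $Y$,'' which is what is needed to preserve the maximizer. Once that is clarified, the conclusion is essentially one line. The rest of the argument (identifying the Bayes optimal classifier with the posterior mode, and noting that the causal structure makes $(X, \theta^d)$ a sufficient conditioning set for $Y$) is standard and should not require any further nontrivial work.
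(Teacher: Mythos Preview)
Your proposal is correct and follows essentially the same route as the paper: use the structural equation $Y=f(X,\theta^d,\bm{\epsilon})$ to identify the task-$d$ posterior $P^d(Y\mid X)$ with $P(Y\mid \theta^d,X)$, then invoke Assumption~\ref{ass:1} (proportionality with a $Y$-independent constant) to transfer the argmax to $P_M^d(Y\mid\theta^d,X)$. Your extra discussion about why the proportionality constant cannot depend on $Y$ is a welcome clarification of a step the paper leaves implicit.
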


In this case, only when $P_M^d(\bm{\theta}|X^d_1, Y^d_1, ..., X^d_k, Y^d_k, X)$ completely concentrate on $\theta^d$, can the in-context learning classifier become the Bayes optimal classifier \cite{Devroye1996APT}:
\begin{theorem}
    \label{thm:in-context-classifier}
    If task $d$ follows the $X \shortrightarrow Y \shortleftarrow \bm{\theta}$ direction, then the in-context learning classifier 
    \begin{align*}
        \argmax_{y\in\mathcal{Y}}P_{M}^d(Y=y|X^d_1, Y^d_1, ..., X^d_k, Y^d_k, X)
    \end{align*} 
    always has a higher or equal probability of misclassification to the Bayes optimal classifier $\argmax_{y\in\mathcal{Y}} P_M^d(Y=y|\theta^d, X)$. Equality only holds when
    \begin{align*}
        \forall x \in \mathcal{X}, \; P_M^d(\theta^d|X^d_1, Y^d_1, ..., X^d_k, Y^d_k, X=x)=1.
    \end{align*}
\end{theorem}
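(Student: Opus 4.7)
The plan is to reduce the theorem to the classical fact that the Bayes-optimal classifier minimizes misclassification risk, combined with Proposition 2.1 and the conditional independence structure implied by the causal graph $X \shortrightarrow Y \shortleftarrow \bm{\theta}$. I view both classifiers as competing predictors on the joint distribution induced by sampling the demonstrations and then a fresh test pair $(X, Y)$ from task $d$: the Bayes rule has direct access to $\theta^d$, whereas the in-context rule only sees the demonstrations and must infer $\theta^d$ implicitly via the posterior that appears in Equation (\ref{eq:in-context-direct}).

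For the inequality, I would first invoke $Y \perp (X^d_1, Y^d_1, \ldots, X^d_k, Y^d_k) \mid (\theta^d, X)$, which follows from the structural equation $Y = f(X, \theta^d, \bm{\epsilon})$ together with the independence of $\bm{\epsilon}$ across examples. Writing $\mathrm{demos} = (X^d_1, Y^d_1, \ldots, X^d_k, Y^d_k)$ and letting $C$ be any decision rule using only $(\mathrm{demos}, X)$, I would then compute, via the tower property and the above conditional independence,
\begin{align*}
    \Pr[C = Y] &= \mathbb{E}\bigl[\,P_M^d\bigl(Y = C(\mathrm{demos}, X) \,\bigm|\, \theta^d, X\bigr)\,\bigr] \\
    &\le \mathbb{E}\bigl[\,\textstyle\max_{y} P_M^d(Y = y \mid \theta^d, X)\,\bigr] = \Pr[C_{\mathrm{Bayes}} = Y],
\end{align*}
and specialize to $C = C_{\mathrm{ICL}}$. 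Taking complements yields the stated inequality on misclassification probability.

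For the equality clause, if $P_M^d(\bm{\theta} \mid \mathrm{demos}, X)$ is a point mass at $\theta^d$ for every $X$, then Equation (\ref{eq:in-context-direct}) collapses the ICL predictor to $P_M^d(Y \mid \theta^d, X)$, so the two classifiers coincide pointwise and the two misclassification probabilities agree. Conversely, equality in the displayed chain above forces $C_{\mathrm{ICL}}(\mathrm{demos}, X) \in \arg\max_{y} P_M^d(Y = y \mid \theta^d, X)$ almost surely; combined with the mixture form of Equation (\ref{eq:in-context-direct}), any nontrivial posterior mass placed on $\bm{\theta}\neq\theta^d$ would perturb the weighted argmax away from the Bayes argmax on some inputs $X$, which in turn forces the concentration condition stated in the theorem.

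The step I expect to be the main obstacle is making the ``only when'' direction fully rigorous: in principle, two distinct values of $\bm{\theta}$ could induce the same classifier on all of $\mathcal{X}$, permitting equality without strict posterior concentration. I would handle this by treating $\bm{\theta}$ as identifying the conditional distribution $P_M^d(Y \mid \bm{\theta}, X)$ in the informative direction exposed by the argmax, so that nondegeneracy of $\bm{\theta}\mapsto \arg\max_y P_M^d(Y=y \mid \bm{\theta}, X)$ delivers the required concentration. The remaining steps are routine applications of Bayes-risk optimality from \cite{Devroye1996APT} together with Fubini's theorem.
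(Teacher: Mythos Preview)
Your proposal is correct and follows essentially the same route as the paper: invoke \cref{prop:bayes-optimal-direct} to identify the Bayes-optimal classifier, expand the misclassification risk of the in-context classifier, and apply the standard Bayes-risk minimality argument from \cite{Devroye1996APT} to obtain the inequality, then read off the equality case from the mixture representation in \cref{eq:in-context-direct}. Your caution about the ``only when'' direction is well placed---the paper's own proof simply asserts that $C_k(X)=C_{\theta^d}(X)$ for all $x$ forces $P_M^d(\theta^d\mid \mathrm{demos},X)=1$ without addressing the identifiability concern you raise, so your treatment is in fact more careful than the original on this point.
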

 
A similar argument can be made for the $Y \shortrightarrow X \shortleftarrow \bm{\theta}$ direction. \footnote{The detailed argument of the $Y \shortrightarrow X \shortleftarrow \bm{\theta}$ direction can be found in \cref{app:channel}.} Here, \cref{eq:in-context-direct} would become:
\begin{align}
    \label{eq:in-context-channel}
    P_{M}^d(X|Y^d_1, X^d_1, ..., Y^d_k, X^d_k, Y) = \int_{\Theta} P_M^d(X|\bm{\theta}, Y) P_M^d(\bm{\theta}|Y^d_1, X^d_1, ..., Y^d_k, X^d_k, Y) d\bm{\theta}
\end{align}
Note that the left-hand side of \cref{eq:in-context-direct} and \cref{eq:in-context-channel} are similar to the direct and channel method introduced by \citet{min2022noisy}. 
However, our analysis differs from theirs in that we do not treat ($Y \shortrightarrow X \shortleftarrow \bm{\theta}$) as the universally superior channel direction for modeling in-context learning, rather arguing that depending on the end task, the causal direction ($X \shortrightarrow Y \shortleftarrow \bm{\theta}$) is sometimes better. This view is supported by our empirical results in \cref{app:exp}.
\begin{figure*}
    \centering
    \includegraphics[width=\textwidth]{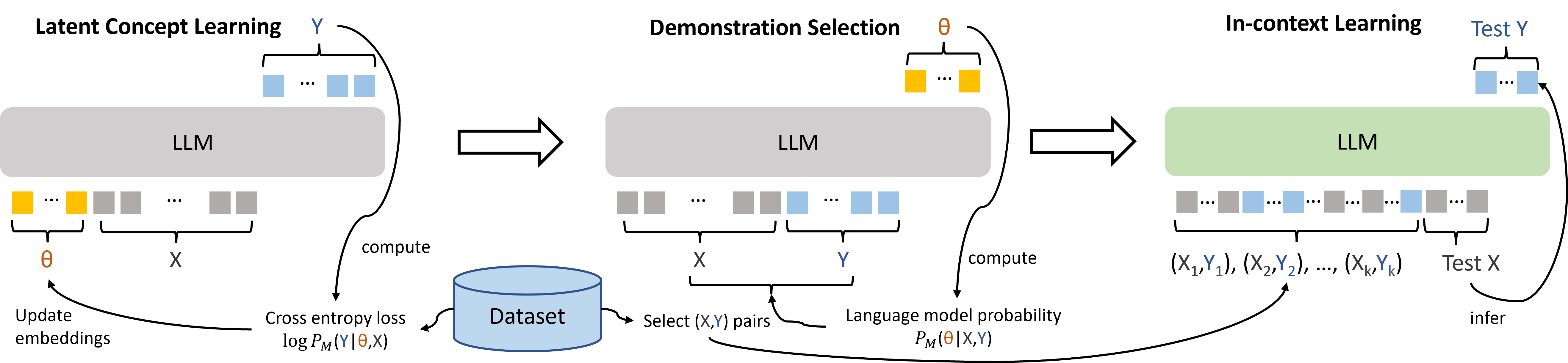}
    \vspace{-2ex}
    \caption{An overview of our proposed two-phased algorithm. Demonstration selection and latent concept learning share the same LLM as demonstration selection needs to reuse the learned concept tokens, while at the in-context learning time, any other generative LLMs can be used. Here we only illustrate the $X \shortrightarrow Y \shortleftarrow \bm{\theta}$ direction. The $Y \shortrightarrow X \shortleftarrow \bm{\theta}$ direction can be illustrated similarly by exchanging $X$ and $Y$ in the above figure.}
    \label{fig:method}
    \vspace{-1ex}
\end{figure*}

\section{Method}


Here we demonstrate how the proposed theory can be practically applied to select optimal demonstration examples.
Since latent variable $\bm{\theta}$ encodes both the task and format information, the whole distribution over $\Theta$ is too complex to model. Unlike traditional topic models, we will only focus on estimating an optimal value $\theta^d$ corresponding to task $d$. 

First, we perform \textit{latent concept learning}, wherein the task latent $\theta^d$ is learned as a set of new token embeddings using prompt tuning over the full demonstration candidate set. With this optimal task latent, we then perform \textit{demonstration selection}, where a smaller set of demonstrations is chosen to maximize the likelihood of postfixing the latent concept tokens. We only need to use a small LLM to do the above steps to obtain an optimal set of demonstrations that can be directly transferred to other LLMs. \cref{fig:method} is an overall illustration of our proposed method.

\subsection{Latent Concept Learning}

\begin{algorithm}[t]
   \caption{Latent concept learning}
   \label{alg:concept-learning}
\begin{algorithmic}
\STATE {\bfseries Input:} Dataset $\mathcal{D} = \{(x_i,y_i,d_i)\}_i$ associated with a set of tasks $\mathcal{S}$, LLM $M$, number of concept tokens per task $c$, learning rate $\alpha$, and number of training steps $N$.
\STATE {\bfseries Output:} LLM $M'$ with fine-tuned concept tokens.
\STATE Add $c|\mathcal{S}|$ new tokens to the vocabulary. i.e. The concept tokens $\hat{\theta}^d$ for each task in $\mathcal{S}$. Randomly initialize their embeddings $E_{new}$. Freeze all parameters in $M$ except $E_{new}$;
\FOR{step $=1$ {\bfseries to} $N$}
    \STATE Sample a random batch $B$ in $\mathcal{D}$ and initialize gradient $g \leftarrow 0$;
    \FOR{each data point $(x,y,d)$ in $B$}
        \STATE $g = g + \frac{\partial \ell(X,Y;\hat{\theta}^d)}{\partial E_{new}}$;
    \ENDFOR
    \STATE $E_{new} = E_{new} - \alpha g$;
\ENDFOR
\end{algorithmic}
\end{algorithm}

We want to first find the optimal value of the latent concept variable $\theta^d$ corresponding to a task $d \in \mathcal{T}$. As $\argmax_{y\in\mathcal{Y}} P_M^d(Y=y|\theta^d, X)$ is the Bayes optimal classifier according to \cref{prop:bayes-optimal-direct}, $\theta^d$ should be able to minimize $-\mathbb{E}_{X,Y,d}[\log{P_M^d(Y|\theta^d, X)}]$ for the $X \shortrightarrow Y \shortleftarrow \bm{\theta}$ direction. In practice, we try to align $\theta^d$ to the token embedding space by adding new tokens to the vocabulary. After this alignment, we hope to be able to use the learned new tokens of $\theta^d$ as regular tokens.



More specifically, building upon the methodology proposed by \citet{lester2021power}, for each specific task $d$, $c$ new concept tokens (denoted as $\hat{\theta}^d$) are added to the original vocabulary of LLM $M$ to represent the corresponding task concept $\theta^d$. Subsequently, the embedding of these new tokens $E_{new}(\hat{\theta}^d)$ is fine-tuned while freezing the remaining parameters of LLM $M$. The variable $c$ is treated as a hyperparameter. In practice, in order to condition on $\theta^d$, the corresponding $c$ concept tokens are appended to the input $X$ (or $Y$) as shown in the example provided below, where $c=2$:

\texttt{<sentiment\_token\_1><sentiment\_token\_2> Can't wait to see the second movie!}

By giving the above input tokens, we ask the LLM to predict the correct label \texttt{Positive} for us. Note that \texttt{<sentiment\_token\_1>} here is just a label assigned to the newly added concept token. It can be anything as long as it does not overlap with the original vocabulary of LLM.

The fine-tuning objective would then be minimizing $\mathcal{L}(\hat{\theta}^d) = \mathbb{E}_{X,Y}[\ell(X,Y;\hat{\theta}^d)]$, where
\begin{equation*}
    \ell(X,Y;\hat{\theta}^d) =
    \begin{cases}
      -\log{P_M^d(Y|\hat{\theta}^d, X)} & \text{if $X \shortrightarrow Y \shortleftarrow \bm{\theta}$}\\
      -\log{P_M^d(X|\hat{\theta}^d, Y)} & \text{if $Y \shortrightarrow X \shortleftarrow \bm{\theta}$}.
    \end{cases}       
\end{equation*}

Theoretically, if we can minimize the above loss function, a Bayes optimal classifier can be obtained, and the concept tokens would be a reasonable delegate of the real latent concept variable:
\begin{proposition}
    \label{prop:loss}
    When $\mathcal{L}(\hat{\theta}^d)$ is minimized, $P_M^d(Y|\hat{\theta}^d, X) = P(Y|\theta^d, X)$ for $X \shortrightarrow Y \shortleftarrow \bm{\theta}$. If the LLM $M$ is invertible, then $\hat{\theta}^d = \theta^d$.\footnote{More discussion can be found in \cref{app:method}.}
\end{proposition}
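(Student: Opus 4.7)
The plan is to recognize that $\mathcal{L}(\hat{\theta}^d)$ is a cross-entropy objective whose minimizer, by a standard Gibbs-inequality argument, makes the model's conditional distribution coincide with the true one, and then to translate that distributional equality into an equality of latent variables via the invertibility hypothesis and Assumption~\ref{ass:1}.

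First I would expand the loss. Under task $d$, the data $(X,Y)$ is generated by the structural equation $Y=f(X,\theta^d,\bm{\epsilon})$, so the joint distribution is $P(X,Y\mid \theta^d)$. Focusing on the $X\shortrightarrow Y \shortleftarrow \bm{\theta}$ case,
\begin{align*}
\mathcal{L}(\hat{\theta}^d)
&= -\mathbb{E}_{X\sim P(\cdot\mid\theta^d)}\,\mathbb{E}_{Y\sim P(\cdot\mid X,\theta^d)}\bigl[\log P_M^d(Y\mid \hat{\theta}^d,X)\bigr] \\
&= \mathbb{E}_{X}\bigl[\mathrm{KL}\bigl(P(\cdot\mid X,\theta^d)\,\Vert\,P_M^d(\cdot\mid \hat{\theta}^d,X)\bigr)\bigr] + \mathbb{E}_{X}\bigl[H\bigl(P(\cdot\mid X,\theta^d)\bigr)\bigr],
\end{align*}
where the entropy term does not depend on $\hat{\theta}^d$. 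Since each KL term is nonnegative and vanishes iff the two conditionals agree, and since $\theta^d$ itself attains that zero (the concept-token embedding space is assumed rich enough to realize the target distribution), any minimizer $\hat{\theta}^d$ satisfies
\begin{equation*}
P_M^d(Y\mid \hat{\theta}^d, X) \;=\; P(Y\mid \theta^d, X) \qquad \text{for $P$-a.e.\ } X.
\end{equation*}
This is the first claim.

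For the second claim, I would invoke Assumption~\ref{ass:1}: $P_M^d(Y\mid \bm{\theta},X)\propto P(Y\mid \bm{\theta},X)$, and because both sides are probability distributions over the finite label set $\mathcal{Y}$, the proportionality constant must be $1$, giving $P_M^d(Y\mid \theta^d,X)=P(Y\mid \theta^d,X)$. Combined with the previous equation, this yields $P_M^d(Y\mid\hat{\theta}^d,X)=P_M^d(Y\mid\theta^d,X)$ for all relevant $X$. Interpreting ``invertibility'' of $M$ as injectivity of the map $\bm{\theta}\mapsto P_M^d(\cdot\mid \bm{\theta},\cdot)$ from the concept-token embedding space to conditional distributions, we immediately conclude $\hat{\theta}^d=\theta^d$. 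The channel-direction case $Y\shortrightarrow X\shortleftarrow\bm{\theta}$ is identical with $X$ and $Y$ swapped.

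The main obstacle, and where I would be most careful, is the invertibility step: what ``$M$ invertible'' means is not completely specified in the statement, and in general a language model's conditional-distribution map need not be injective on the embedding space. I would state an explicit assumption that $\bm{\theta}\mapsto P_M^d(\cdot\mid\bm{\theta},X)$ is one-to-one (at least on the optimizer set) and flag it as necessary; without it, one only recovers $\hat{\theta}^d$ up to the equivalence relation induced by equality of induced conditionals, which is still sufficient for the practical use of $\hat{\theta}^d$ as a delegate latent variable in the demonstration-selection algorithm. A secondary but routine issue is the expressivity assumption on the new token embeddings $E_{new}$, which must be able to represent $\theta^d$ well enough for the KL terms to actually reach zero; I would note this explicitly as an idealization, consistent with the ``when $\mathcal{L}(\hat{\theta}^d)$ is minimized'' phrasing.
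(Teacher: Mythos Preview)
Your proposal is correct and follows essentially the same route as the paper: decompose the cross-entropy loss into an entropy term (independent of $\hat{\theta}^d$) plus a KL term to obtain the distributional equality at the minimizer, then invoke invertibility of $M$ (interpreted as injectivity of $\bm{\theta}\mapsto P_M^d(\cdot\mid\bm{\theta},\cdot)$) to conclude $\hat{\theta}^d=\theta^d$. Your explicit use of Assumption~\ref{ass:1} to bridge $P$ and $P_M^d$ at $\theta^d$, and your flagging of the expressivity and invertibility idealizations, are more careful than the paper's terse version but do not change the argument.
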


We denote the LLM $M$ with fine-tuned concept tokens by $M'$. Since we add the concept tokens into the regular token vocabulary, the raw LLM output probability $P_{M'}(\hat{\theta}^d|\bm{w}_{1:t})$ ($\bm{w}_{1:t}$ denote a given prompt) would be in the token sequence space $\mathcal{X}$ instead of the concept space $\Theta$. Since learning all possible $\theta^d \in \Theta$ is infeasible, we propose to approximate the concept space $\Theta$ by sampling a diverse subset of tasks $\mathcal{S} \subseteq \mathcal{T}$. Then the estimated conditional probability of $\theta^d$ would be:
\begin{align*}
    \hat{P}_{M'}^d(\hat{\theta}^d|\bm{w}_{1:t}) = \frac{P_{M'}^d(\hat{\theta}^d|\bm{w}_{1:t})}{\sum_{t \in \mathcal{S}}P_{M'}^t(\hat{\theta}^t|\bm{w}_{1:t})}
\end{align*}
To obtain the concept tokens for all tasks in $\mathcal{S}$, we fine-tune all tasks together with the loss $\sum_{d\in\mathcal{S}} \mathcal{L}(\theta^d)$. We summarize the proposed algorithm in \cref{alg:concept-learning}.

Note that the embedding matrix of a generative LLM is shared on both the input and output sides. So while we only see the concept tokens on the input side at the training time, they can be viewed as regular word tokens that can be generated on the output side.

\subsection{Demonstration Selection}

According to \cref{thm:in-context-classifier}, for a task $d$, to make the in-context learning classifier closer to the Bayes optimal classifier, we need to select demonstrations $(X^d_1, Y^d_1), ..., (X^d_k, Y^d_k)$ that maximize $P_M^d(\theta^d|X^d_1, Y^d_1, ..., X^d_k, Y^d_k, X)$ for all $X \in \mathcal{X}$. Then our goal then becomes selecting demonstrations that can best infer the task concept for all test inputs on average:
\begin{align*}
    \argmax_{X^d_1, Y^d_1, ..., X^d_k, Y^d_k} \mathbb{E}_X[P_M^d(\theta^d|X^d_1, Y^d_1, ..., X^d_k, Y^d_k, X)] 
\end{align*}
As test examples are sampled independent of the demonstrations, and $P_M(X) = P(X)$ according to \cref{ass:1}, we have 
\begin{align*}
    \mathbb{E}_X[P_M^d(\theta^d|X^d_1, Y^d_1, ..., X^d_k, Y^d_k, X)] = P_M^d(\theta^d|X^d_1, Y^d_1, ..., X^d_k, Y^d_k)
\end{align*}
If we assume each demonstration is also sampled independently, we have:
\begin{align*}
    P_M^d(\theta^d|X^d_1, Y^d_1, ..., X^d_k, Y^d_k) = \frac{\prod_{i=1}^k P_M^d(\theta^d|X_i^d, Y_i^d)}{P_M^d(\theta^d)^{k-1}}
\end{align*}
Assuming that $\bm{\theta}$ has a uniform prior, then our goal becomes finding the top $k$ demonstrations that maximize $\hat{P}_{M'}^d(\hat{\theta}^d|X_i^d, Y_i^d)$. 
Note that the independence between demonstrations is a simplified assumption to reduce the combinatory search space of $(X^d_1, Y^d_1), ..., (X^d_k, Y^d_k)$. In practice, selected demonstrations are likely correlated as some demonstrations may work well together but not necessarily work well by themselves. However, it would be too expensive to search the $O(|\mathcal{D}^d|^k)$ combinations over the candidate set $\mathcal{D}^d$. In practice, this simplification works reasonably well. We leave this combinatory search problem to future research.  

Also, as we are using an LLM to approximate the data distribution, the order of the demonstrations might matter. We will show in the Experiment section that the order does not matter, so no reordering of the selected demonstrations is needed. The full selection algorithm is shown in \cref{alg:demo-selection}.

\begin{algorithm}[t]
   \caption{Demonstration selection}
   \label{alg:demo-selection}
\begin{algorithmic}
\STATE {\bfseries Input:} dataset $\mathcal{D}^d$ for a task $d$. LLM with fine-tuned concept tokens $M'$. The number of demonstrations $k$.
\STATE {\bfseries Output:} A set of selected demonstrations.
\FOR{each $(X^d, Y^d)$ in $\mathcal{D}^d$}
    \STATE Compute $\hat{P}_M^d(\hat{\theta}^d|X^d, Y^d)$;
\ENDFOR
\STATE Select top $k$ examples with the largest $\hat{P}_M^d(\hat{\theta}^d|X^d, Y^d)$, denoted as $(X^d_1, Y^d_1), ..., (X^d_k, Y^d_k)$;
\end{algorithmic}
\end{algorithm}



\section{Experiments}
\label{sec:exp}

\textbf{Datasets.} We conduct experiments on eight datasets from five different types of NLP classification tasks: sentiment analysis, linguistic analysis, topic classification, emotion classification, and hate speech detection. For sentiment analysis, we choose the Stanford Sentiment Treebank (SST2) dataset \cite{socher-etal-2013-recursive} from the GLUE benchmark \cite{wang2018glue} and the financial phrase bank (FPB) dataset \cite{Malo2014GoodDO}. SST2 is constructed based on movie reviews labeled ``positive" or ``negative", and FPB is based on financial news labeled ``positive", ``negative", or ``neutral". For linguistic analysis, we choose the Corpus of Linguistic Acceptability (COLA) dataset \cite{warstadt2018neural} from the GLUE benchmark, based on sentences collected from linguistic books, labeled with ``acceptable" or ``unacceptable". For topic classification, we choose the DBpedia ontology classification dataset \cite{zhang2015character}, based on DBpedia 2014 \cite{lehmann2015dbpedia}, labeled with 14 different ontology classes. For emotion classification, we choose the dataset from \citet{chatterjee-etal-2019-semeval} and \citet{saravia-etal-2018-carer}, both of which are collected from Twitter. \citet{chatterjee-etal-2019-semeval} (EmoC) predict emotion given a three-turn contextual dialogue, while \citet{saravia-etal-2018-carer} predict emotion given a Twitter message with clear emotion. For hate speech detection, we choose the online hate speech detection dataset (ETHOS) \cite{mollas2020ethos}, collected from online social media platforms. Here we detect two types of hate speech: sexual orientation (ETHOS-SO) and religion (ETHOS-R). While in \cref{sec;prelim}, we assume that all tasks share the same label space $\mathcal{Y}$, here we relax such assumption and allow a different number of labels for different tasks. We use minimal formatting to process each example. A detailed description of the datasets and our data processing procedure can be found in \cref{app:exp}. 

\begin{figure*}
    \centering
    \vspace{-1ex}
    \includegraphics[width=0.8\textwidth]{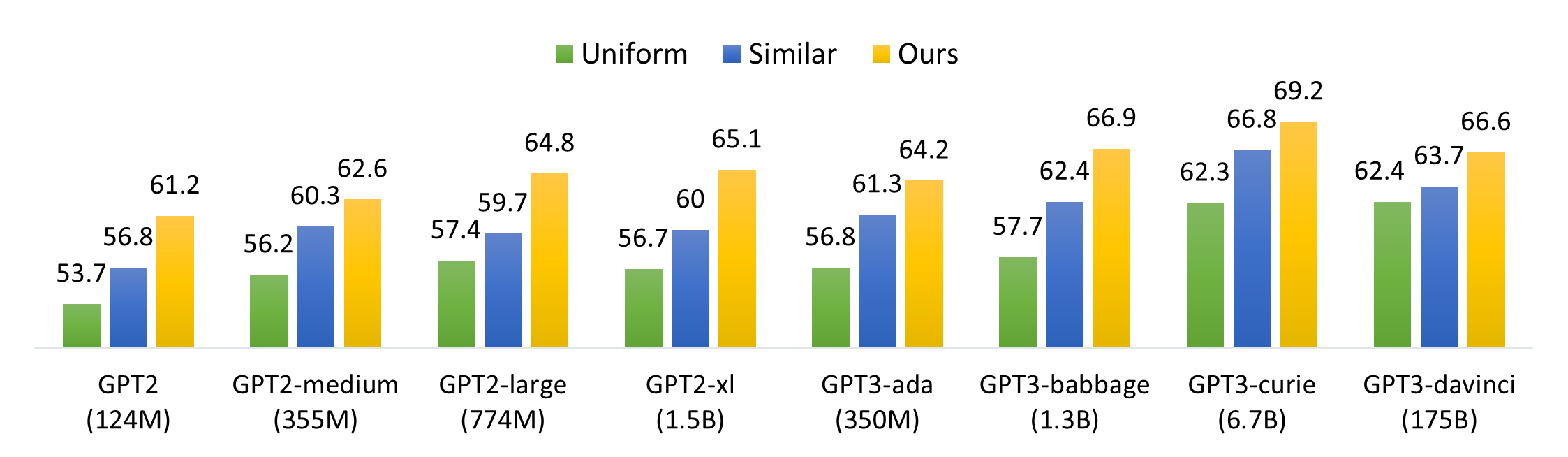}
    \vspace{-2ex}
    \caption{Accuracy of 4-shot in-context learning using demonstrations selected by our method and other baselines, averaged over eight datasets. Our demonstrations are selected using GPT2-large, and the same set of demonstrations is then applied to all other LLMs.}
    \vspace{-2ex}
    \label{fig:results}
\end{figure*}

\textbf{Experiment settings.} To determine the causal direction for each task, we select the direction that can give higher accuracy when using random demonstrations\footnote{Detailed results see \cref{fig:causal} in \cref{app:exp}.}. We adopt the $Y \rightarrow X \leftarrow \bm{\theta}$ direction for sentiment analysis, topic classification, and emotion classification tasks, which is consistent with the intuition that people usually have some sentiment, topic, or emotion in mind before writing a piece of text.
We adopt the $X \rightarrow Y \leftarrow \bm{\theta}$ direction for the linguistic analysis and hate speech detection type of tasks. While this is less intuitive, we can understand this as linguistic error and hate speech detection are more of a post hoc task in contrast to the previous tasks.

Without specification, we use $k=4$ number of demonstrations and $c=10$ number of concept tokens per dataset for our experiments, as the context length of GPT2 is 1024, and a larger number of demonstrations may not be able to completely fit into it. We use GPT2-large to learn the concept tokens and then compute the probability of each candidate demonstration example. We select our demonstrations from a randomly selected 100 example subset of the train set as the candidate set $\mathcal{D}^d$. We use the same set of demonstrations selected by GPT2-large for all other LLMs. We test the performance of the selected demonstrations using at most 1000 examples randomly sampled from the test set. Each experiment is repeated for five runs with different random seeds (the randomness comes from the sampling of the candidate set and the sampling of the test set). We adopt a large portion of the code from \citet{min-etal-2022-metaicl}, which is based on Huggingface \cite{wolf2019huggingface}. 

\textbf{Baselines.} We consider the following baselines:
\begin{itemize}
    \item \textbf{Uniform}: We uniformly select $k$ demonstrations from $\mathcal{D}$ for each test example.
    \item \textbf{Similar}: According to \citet{liu-etal-2022-makes}, demonstrations that are semantically similar to the test example would hare more performant. Following their method, we use a pre-trained sentence Transformer \cite{reimers-2019-sentence-bert} to calculate the cosine similarity between the demonstrations and test examples. We choose the top $k$ similar demonstrations from $\mathcal{D}$ for each test example.
\end{itemize}

\textbf{Main results.}\footnote{The complete results with standard deviations in this section can be found in \cref{app:exp}.} \cref{fig:results} shows our main results averaged over all eight datasets, using the first-generation GPT2s and GPT3s, without any instruction fine-tuning \cite{ouyang2022training} or Reinforcement Learning from Human Feedback (RLHF) \cite{stiennon2020learning}. Our method significantly outperforms baselines on eight different LLMs, with 12.5\% relative improvement to the uniform selection baseline on average, which shows the effectiveness of our method. The demonstrations selected by our method are exclusively based on GPT2-large, while the same set of demonstrations can be generalized to all other GPTs. 

\textbf{Results with non-GPT models.} In \cref{fig:other_llm}, we test the demonstrations selected by our method using GPT2-large on more LLMs (GPT3 \cite{brown2020language}, GPT3-instruct \cite{ouyang2022training, stiennon2020learning}, GPT-J \cite{gpt-j}, OPT \cite{zhang2022opt}, and LLaMA \cite{touvron2023llama}) with similar sizes (6-7B), and show that the selected demonstrations improve in-context learning performance of all of them. The fact that GPT3-curie obtains the largest performance improvement is likely because similar pre-training data distributions help the generalization of the selected demonstrations. Different-size GPT2 models share the same pre-training corpus \cite{radford2019language}, while GPT3s are pre-trained on a dataset expanded from the GPT2 pre-training corpus \cite{brown2020language}. Thus the pre-training distribution of GPT3-curie and GPT2-large can be assumed to be similar. 


\begin{figure}[tb]
    \centering
    \vspace{-1ex}
    \begin{subfigure}[b]{.6\textwidth}
        \centering
        \includegraphics[width=\textwidth]{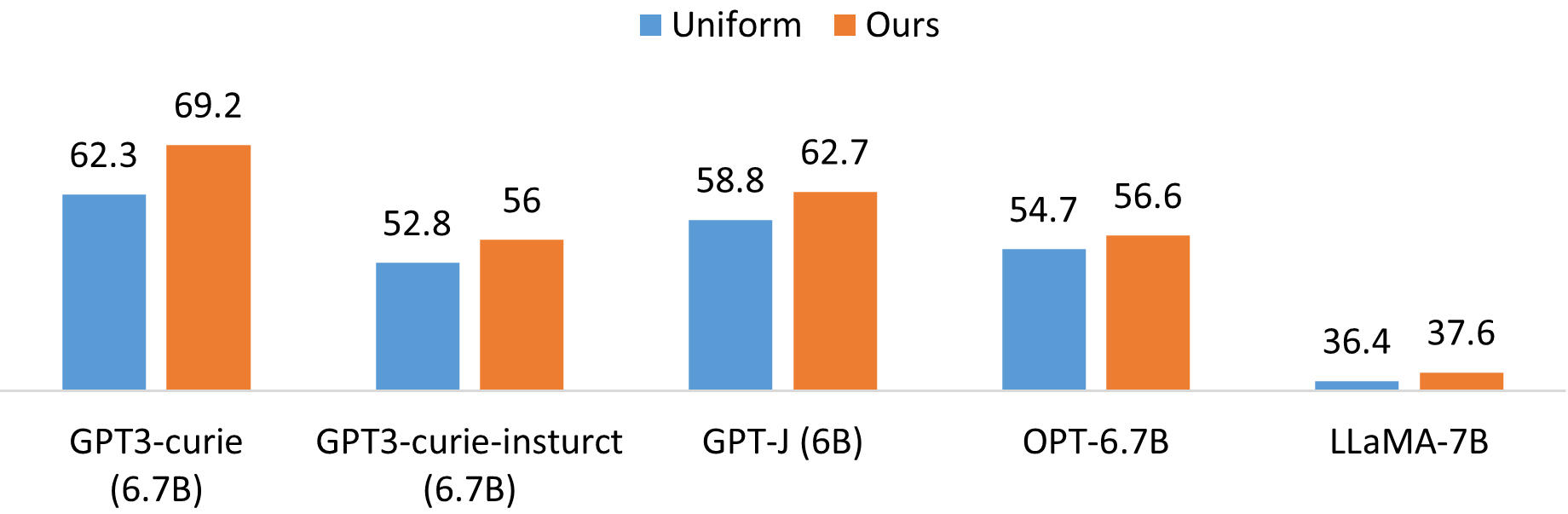}
        \caption{Proposed method v.s. randomly selected demonstrations}
        \label{fig:other_llm}
    \end{subfigure}
        \hfill
    \begin{subfigure}[b]{.3\textwidth}
        \centering
        \includegraphics[width=\textwidth]{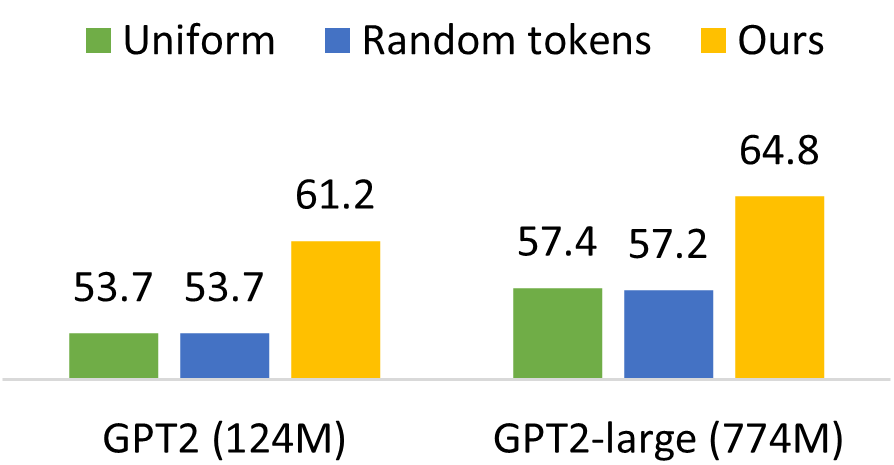}
        \caption{Proposed method v.s. using randomly selected tokens}
        \label{fig:random_tokens}
    \end{subfigure}
    \vspace{-1ex}
    \caption{In-context learning accuracy averaged over all eight datasets.}
    \vspace{-1ex}
\end{figure}

\textbf{Results on GSM8K.} Since our primary goal is to connect the theory with real-world models and datasets, we did not try to include harder tasks in the main results in \cref{fig:results}. In practice, our proposed method is most effective with hard tasks that even parameter-efficient fine-tuning with smaller models cannot outperform in-context learning with the same or larger models. To showcase the usefulness of our proposed algorithm, We added a new dataset, GSM8K \citep{cobbe2021training}, which is a math word problem-solving dataset with chain-of-thoughts solutions. \cref{tab:gsm8k} shows the test accuracy of the final numerical answer with greedy generation. We randomly select a test set of 200 examples instead of using the full test set for computation efficiency. \footnote{Note that we did not use a calculator to insert the correct result of each generated math equation during generation for time efficiency, which resulted in slightly lower scores.}

As shown in the first row of \cref{tab:gsm8k}, prompt tuning with ten new tokens can only obtain less than 4\% accuracy on the GSM8K test set. 
The last four rows show the in-context learning results with different size Llama 2 models \citep{Touvron2023} and ChatGPT. Our proposed demonstration selection method (last two columns) significantly outperformed the Uniform and Similar baseline. We also find that the demonstrations selected with a larger model (7B) are more effective than those selected with a smaller model (1.5B).
The results show that our demonstration selection method is a good choice under a low data setting, with a small computing budget and minimal inference latency. Our proposed method can also potentially be combined with other prompting techniques \citep{chen2022program} to boost performance further. 

\begin{table}[]
    \centering
    \small
    \begin{tabular}{ccccc}
        \toprule
         & Uniform & Similar & Ours w/ Llama 2 (7B) & Ours w/ GPT2-XL (1.5B) \\
        \midrule
        Prompt tuning & - & - & 15.2 & 7.3\\
        Llama 2 (7B) & 11.4 & 13.1 & 19.3 & 15.9\\
        Llama 2 (13B) & 17.0 & 18.3 & 21.6 & 20.5\\
        Llama 2 (70B) & 50.2 & 53.5 & 54.3 & 52.9\\
        ChatGPT (gpt-3.5-turbo) & 76.5 & 78.1 & 81.2 & 80.4\\
        \bottomrule
    \end{tabular}
    \caption{Prompt tuning and 4-shot in-context learning accuracy on a subset of GSM8K test set. Our demonstrations are selected with either 7B Llama 2 or GPT2-XL}
    \vspace{-2ex}
    \label{tab:gsm8k}
\end{table}

\textbf{Learned tokens v.s. Random tokens.} To confirm the critical role of the latent concept variable in the proposed demonstration selection algorithm, we compare the performance of using the learned concept tokens versus using randomly selected tokens from the original vocabulary in \cref{fig:random_tokens}. The demonstrations selected by random tokens only obtain the same performance as randomly selected demonstrations, showing that the performance gain of our method comes from the learned concept tokens containing the task and format information, not other elements of our algorithm.

\textbf{$k$ ablation study.} While we use $k=4$ demonstrations for all experiments, we also test the effectiveness of our method using different $k$. As shown in \cref{fig:k}, our method significantly outperforms the random selection baseline with $k=2$, 4, 8, and 16. To fit in large $k$s, we use GPT3-ada with a longer context length (2048). Note that for real-world tasks, it is in general not true that more demonstrations guarantee higher performance \cite{chen2023takes}. We can see that the uniform baseline performance increases from $k=2$ to $k=8$, then drops a little at $k=16$. Our method improves the uniform baseline by around 5\% absolute for all $k$s, while $k=4$ improves the most (6.6\%). Our method appears to have a diminishing effect when $k$ becomes larger, which is likely because the effect of more demonstrations overwhelms the effect of demonstration choices. 

\begin{wrapfigure}{r}{0.7\textwidth}
    \centering
    \vspace{-1ex}
    \begin{subfigure}[b]{.35\textwidth}
        \centering
        \includegraphics[width=\textwidth]{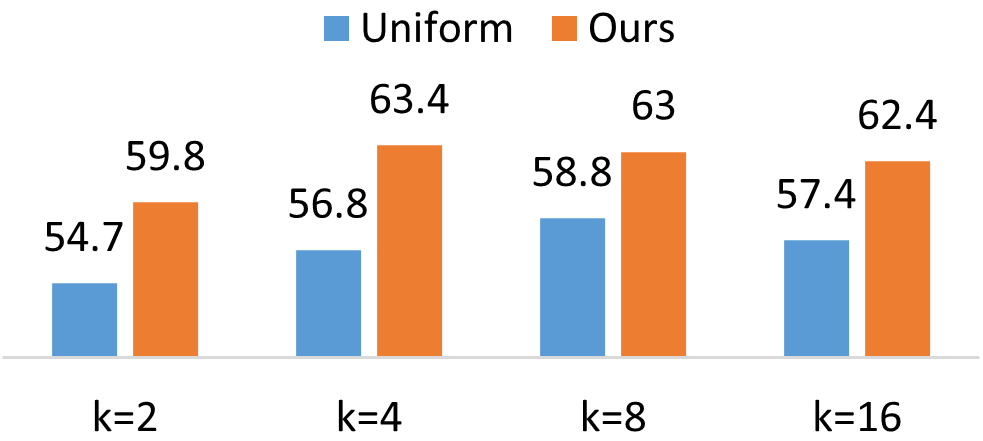}
        \caption{$k$ ablation study.}
        \label{fig:k}
    \end{subfigure}
        \hfill
    \begin{subfigure}[b]{.3\textwidth}
        \centering
        \includegraphics[width=\textwidth]{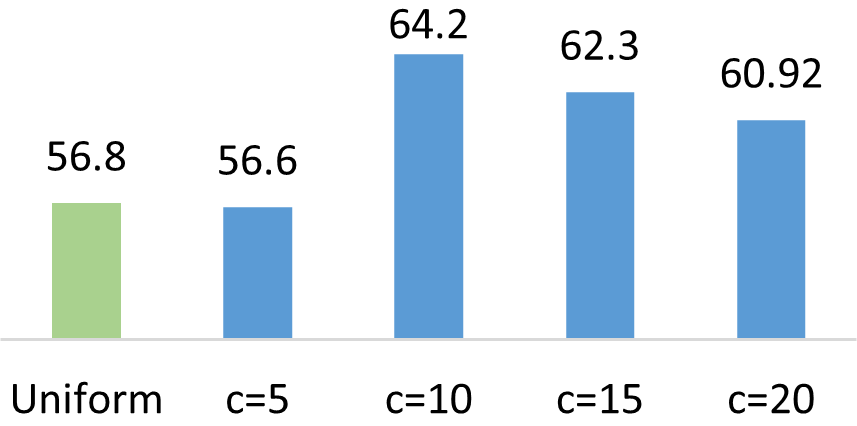}
        \caption{$c$ ablation study.}
        \label{fig:c}
    \end{subfigure}
    \vspace{-1ex}
    \caption{In-context learning accuracy of our method versus random selection baseline averaged over all eight datasets with GPT3-ada.}
    \vspace{-1ex}
\end{wrapfigure}

\textbf{$c$ ablation study.} While we use $c=10$ number of concept tokens for all experiments, we also investigate the effect of different $c$ on our method. When $c$ is small ($c=5$), the concept tokens cannot effectively capture the task and format information, thus cannot improve the performance. When $c$ increases from 10 to 20, we observe a drop in the performance. It is likely because the selectivity of the concept tokens decreases when $c$ increases. The longer the concept token sequence is, the more likely it will contain meaningless tokens that do not contribute to demonstration selection.

\textbf{Effect of demonstrations' order.} We find that the demonstrations selected by our method are insensitive to their order in most cases.\footnote{Detailed results see \cref{fig:reorder} in \cref{app:exp}.} An exception is the EmoC dataset, where our method has a high variance. On the contrary, \citet{lu2022fantastically} found that the order of the demonstration matters, and a good ordering cannot be transferred between different LLMs. We suspect that the ordering only matters when the demonstration selection method is not robust. Since \citet{lu2022fantastically} randomly selects one set of demonstrations for the whole test set, the variance in performance is high with different demonstrations, thus ordering matters. And since such ordering is not transferable while our selected demonstrations are highly transferable, we suspect the core task information is stored in the content of the demonstrations, while the ordering mainly captures model-specific artifacts.


\begin{wrapfigure}{r}{0.6\textwidth}
    \centering
    \vspace{-1ex}
    \includegraphics[width=0.6\textwidth]{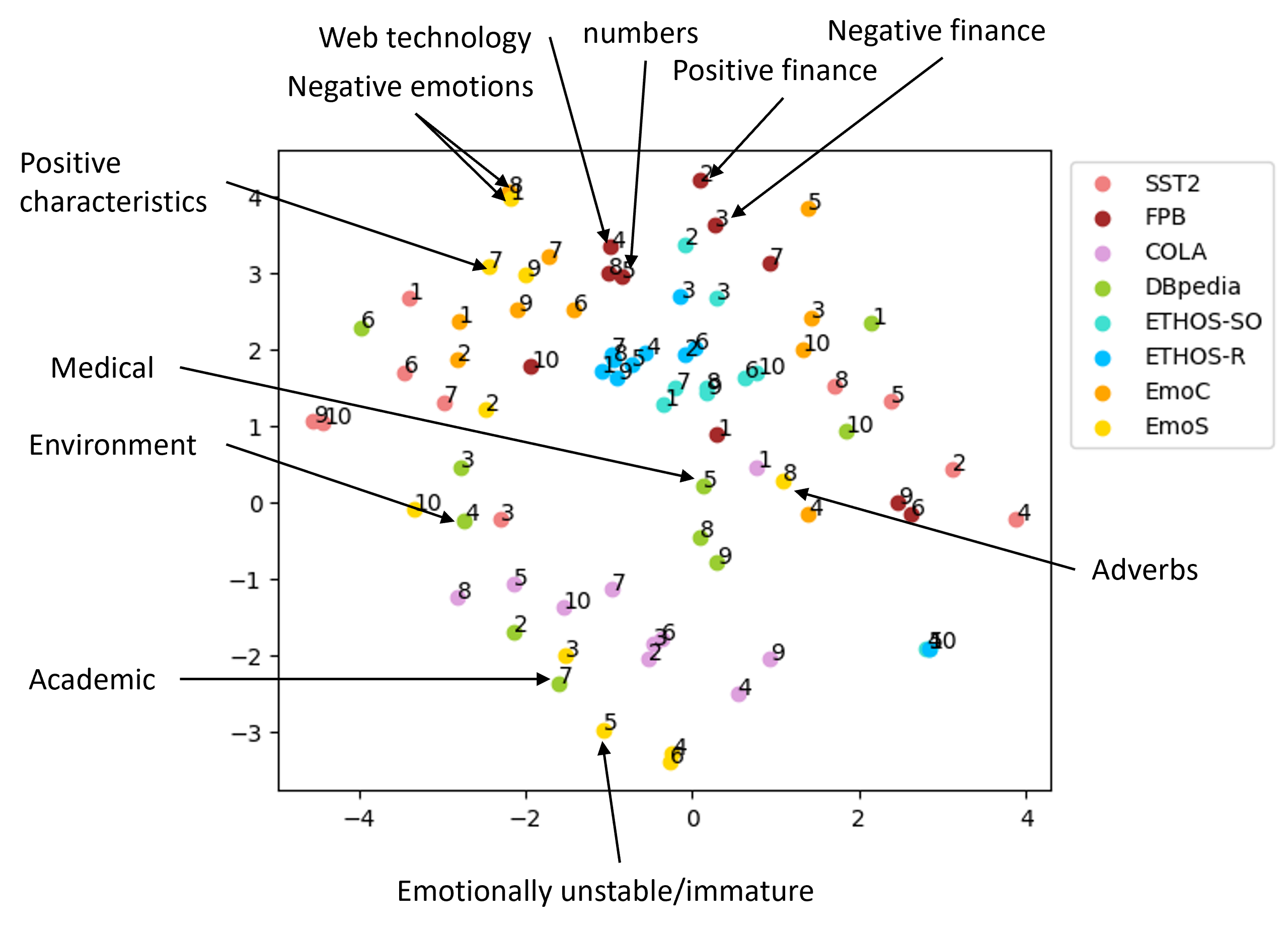}
    \vspace{-2ex}
    \caption{t-SNE plot of the learned concept tokens for each task. Concept tokens that can be explained by similar tokens are summarized in the graph.}
    \vspace{-2ex}
    \label{fig:tsne}
\end{wrapfigure}

\textbf{Qualitative analysis.} In \cref{fig:tsne}, we provide a t-SNE \cite{JMLR:v9:vandermaaten08a} projection of the learned concept token embeddings. The tokens corresponding to semantically similar tasks are close together. Note that this result only aims to provide a straightforward illustration of concept tokens. The effect of concept tokens should be understood by the previous quantitative results.\footnote{The list of similar tokens for these concept tokens can be found in \cref{tab:similar-tokens} in \cref{app:exp}.}

We also list the top 4 selected demonstrations in \cref{tab:example_data} in \cref{app:exp}. Compared to the examples with lower scores, the selected examples for GSM8K have more deductive reasoning (i.e. with the connecting words `so', `then', `thus', etc.), instead of listing parallel conditions. For SST2, the selected examples are longer and more complex, sometimes including a `but'. This can be understood as these harder examples can represent the task more comprehensively. This conclusion also aligns with the findings in \citep{han2023understanding} that hard examples in the pre-training data contribute to in-context learning the most. The label distribution of the selected demonstrations is usually balanced in class, which reduces the possible biases introduced by the demonstrations.
\section{Related Work}

Heuristic solutions, such as selecting demonstrations based on the similarity between the demonstrations and test input \citep{liu-etal-2022-makes,su2022selective,rubin2021learning} have been proposed. \cite{lu2022fantastically} propose to reorder the demonstration based on the entropy of the predicted labels. In this paper, we use the similarity-based selection method as a baseline while do not include the label entropy-based reordering method as we show that the ordering of the demonstrations does not matter for our method. 

Previous research on the phenomenon of in-context learning in Transformers has identified a number of pre-training data distributions that can lead to the emergence of this capability, including a Hidden Markov Model distribution \citep{xie2022an} and a skewed Zipfian distribution with high burstiness \citep{chan2022data}. Other studies have sought to understand the underlying mechanisms of in-context learning by making connections with gradient descent \cite{von2022transformers,dai2022can,akyurek2022learning}, formalizing it as an algorithm learning problem \cite{li2023transformers}, or proposing a latent variable theory similar as ours \cite{jiang2023latent,hahn2023theory,xie2022an}. While providing valuable insights on how in-context learning works, these works are limited to synthetic datasets and toy Transformers, while it remains unclear if these results generalize to LLMs pre-trained on real-world text data and whether these results can help in-context learning performance.
In contrast, we propose a Bayesian explanation of in-context learning that can be verified with real-world LLMs on various NLP datasets. Dai et al. \cite{dai2022can} provide a practical algorithm based on the understanding that the Transformer has a dual form of gradient descent. However, their empirical results are smaller in scale, with six datasets and only one model (350M), and has less significant improvements (5.4\% relative to baseline).

There are also works trying to understand in-context learning from an empirical perspective \cite{bansal2022rethinking,min2022noisy}.
Min et al. \cite{min2022rethinking} found demonstrations' ground truth labels do not matter for in-context learning, which we find is not entirely accurate in \cref{app:exp}. On the other hand, chain-of-thoughts prompting \cite{wei2022chain,zhou2022least,wang2022self} find that providing step-by-step explanations improves in-context learning performance. 


\section{Conclusion}

In this work, we endeavor to comprehend large language models (LLMs) through a Bayesian lens and posit them as implicit topic models that infer a latent conceptual variable from prompts. Motivated by this understanding, we propose a two-step algorithm that first extracts latent conceptual tokens from a small LLM and then selects demonstrations that have the greatest probability of predicting the corresponding conceptual tokens. The selected demonstrations can then be directly generalized to other LLMs. The efficacy of our algorithm across various text classification datasets and GPT models validates our explanation of in-context learning. 

\section*{Acknowledgements}
This work was supported by the National Science Foundation award \#2048122. The views expressed are those of the author and do not reflect the official policy or position of the US government. We thank Google for its generous gift to the University of California.

\bibliography{ref}
\bibliographystyle{abbrvnat}

\newpage
\appendix
\onecolumn

\section{Proofs}

\subsection{Direct direction} 

\begin{assumption}
    \label{app:ass:1}
    (\cref{ass:1})
    Assume that $P_M(X) = P(X)$, and $P_M^d(Y|\bm{\theta}, X) \propto P(Y|\bm{\theta}, X)$ for $X \shortrightarrow Y \shortleftarrow \bm{\theta}$.
\end{assumption}

\begin{proposition}
    \label{app:prop:bayes-optimal-direct}
    (\cref{prop:bayes-optimal-direct})
    If task $d$ follows the $X \shortrightarrow Y \shortleftarrow \bm{\theta}$ direction, $\argmax_{y\in\mathcal{Y}} P_M^d(Y=y|\theta^d, X)$ is the Bayes optimal classifier.
\end{proposition}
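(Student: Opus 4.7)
The plan is to reduce the claim to two observations: (i) that the Bayes optimal classifier for task $d$, operating on the true data distribution $P$, is exactly $\argmax_{y\in\mathcal{Y}} P(Y=y \mid \theta^d, X)$, and (ii) that under \cref{ass:1} the model probability $P_M^d(Y \mid \theta^d, X)$ and the true probability $P(Y \mid \theta^d, X)$ induce the same argmax. Step (ii) is where the proportionality in the assumption gets promoted to an equality of argmaxes, and that is the only substantive content of the argument.

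First I would recall the structural setup: for task $d$ the data is generated by $Y^d = f(X^d, \theta^d, \bm{\epsilon})$ with $\bm{\epsilon}$ independent of everything else, and $\theta^d \in \Theta$ is a fixed deterministic value identified with the task through the injection $\mathcal{T} \hookrightarrow \Theta$. Conditioning on the event ``the sample came from task $d$'' therefore coincides with conditioning on $\bm{\theta} = \theta^d$, so the true conditional $P(Y \mid X)$ for task $d$ is $P(Y \mid X, \theta^d)$. A standard result (e.g., \citet{Devroye1996APT}, as referenced just after \cref{thm:in-context-classifier}) then gives that $\argmax_{y} P(Y = y \mid X, \theta^d)$ minimizes the 0--1 risk, i.e., is the Bayes optimal classifier for task $d$.

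Next I would invoke \cref{ass:1}, which states $P_M^d(Y \mid \bm{\theta}, X) \propto P(Y \mid \bm{\theta}, X)$ in the $X \shortrightarrow Y \shortleftarrow \bm{\theta}$ direction. Because $\mathcal{Y}$ is discrete and both sides are genuine probability mass functions that sum to $1$ over $y \in \mathcal{Y}$, the proportionality constant must be $1$, so in fact $P_M^d(Y \mid \theta^d, X) = P(Y \mid \theta^d, X)$ pointwise in $y$. Consequently
\begin{align*}
\argmax_{y\in\mathcal{Y}} P_M^d(Y=y \mid \theta^d, X) \;=\; \argmax_{y\in\mathcal{Y}} P(Y=y \mid \theta^d, X),
\end{align*}
and the right-hand side is the Bayes optimal classifier identified in the previous step, completing the proof.

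There is no serious obstacle here; the only subtlety is being explicit that ``Bayes optimal classifier for task $d$'' means optimality of the 0--1 risk under the task-specific distribution $P(\cdot \mid \theta^d)$, rather than under some marginal over tasks. Once that convention is fixed, the argument is a one-line consequence of the assumption, and the same scheme will transfer directly to the $Y \shortrightarrow X \shortleftarrow \bm{\theta}$ direction used later with the channel-style probability.
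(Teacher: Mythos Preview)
Your proposal is correct and follows essentially the same two-step structure as the paper's proof: identify $P^d(Y\mid X)$ with $P(Y\mid \theta^d, X)$ via the structural equation, then use \cref{ass:1} to transfer the argmax from $P$ to $P_M^d$. One small remark: your promotion of the proportionality in \cref{ass:1} to an outright equality relies on $P_M^d(Y\mid\theta^d,X)$ summing to $1$ over $\mathcal{Y}$, which is not obviously guaranteed by the token-space definition of $P_M^d$; the paper (and your argument) only needs that proportionality preserves the argmax, which holds regardless.
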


\begin{proof}
    Since the data generation of the task $d$ can be written as $Y = f(X, \theta^d, \bm{\epsilon})$, we have 
    \begin{align*}
        P^d(Y|X) = P(Y|\theta^d, X).
    \end{align*}
    And by \cref{app:ass:1},  we have
    \begin{align*}
        \argmax_{y\in\mathcal{Y}} P_M^d(Y=y|\theta^d, X) = \argmax_{y\in\mathcal{Y}} P(Y=y|\theta^d, X).
    \end{align*}
    Thus $\argmax_{y\in\mathcal{Y}} P_M^d(Y=y|\theta^d, X)$ is the Bayes optimal classifier.
\end{proof}

\begin{theorem}
    \label{app:thm:1}
    (\cref{thm:in-context-classifier})
    If task $d$ follows the $X \shortrightarrow Y \shortleftarrow \bm{\theta}$ direction, then the in-context learning classifier 
    \begin{align*}
        \argmax_{y\in\mathcal{Y}}P_{M}^d(Y=y|X^d_1, Y^d_1, ..., X^d_k, Y^d_k, X)
    \end{align*} 
    always has a higher or equal probability of misclassification to the Bayes optimal classifier $\argmax_{y\in\mathcal{Y}} P_M^d(Y=y|\theta^d, X)$. Equality only takes when
    \begin{align*}
        \forall x \in \mathcal{X}, \; P_M^d(\theta^d|X^d_1, Y^d_1, ..., X^d_k, Y^d_k, X=x)=1.
    \end{align*}
\end{theorem}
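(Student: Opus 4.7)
My plan is to obtain the misclassification inequality as a direct consequence of \cref{prop:bayes-optimal-direct} combined with the standard pointwise Bayes decision argument, and then to read the equality condition off of \cref{eq:in-context-direct}. Throughout, let $g^\star(x) = \argmax_{y\in\mathcal{Y}} P_M^d(Y=y\mid \theta^d, x)$ denote the Bayes optimal classifier, and let $g_{\mathrm{icl}}(x) = \argmax_{y\in\mathcal{Y}} P_M^d(Y=y\mid X^d_1, Y^d_1,\ldots,X^d_k,Y^d_k, x)$ denote the in-context learning classifier (both deterministic once the demonstrations are fixed).

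First I would fix an arbitrary $x \in \mathcal{X}$ and compare the pointwise probability of correct prediction under the true concept $\theta^d$. By \cref{prop:bayes-optimal-direct}, $P_M^d(Y = g^\star(x)\mid \theta^d, x) = \max_{y\in\mathcal{Y}} P_M^d(Y=y\mid \theta^d, x)$, so for every label $y' \in \mathcal{Y}$ we have $P_M^d(Y = y'\mid \theta^d, x) \le P_M^d(Y = g^\star(x)\mid \theta^d, x)$. Applying this with $y' = g_{\mathrm{icl}}(x)$ and taking expectation over $X$ (using independence of the test example from the demonstrations given $\theta^d$, which lets me legitimately condition on $\theta^d$ inside the expectation) yields $\Pr(g_{\mathrm{icl}}(X) \neq Y) \ge \Pr(g^\star(X) \neq Y)$. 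This is precisely the stated inequality, and it is essentially the classical result of \cite{Devroye1996APT} specialized to our setting.

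For the equality half, the easy direction is to substitute the hypothesis $P_M^d(\theta^d \mid X^d_1,Y^d_1,\ldots,X^d_k,Y^d_k, X=x) = 1$ into \cref{eq:in-context-direct}: the posterior collapses to a point mass and the integral reduces to $P_M^d(Y\mid \theta^d, x)$, so $g_{\mathrm{icl}}(x)=g^\star(x)$ for every $x$, and the two error probabilities coincide. The step I expect to be the main obstacle is the converse. Equality of errors forces $g_{\mathrm{icl}}(x)$ to lie in the Bayes-argmax set $\argmax_y P_M^d(Y=y\mid \theta^d, x)$ for $P$-almost every $x$; but $g_{\mathrm{icl}}(x)$ is itself the argmax of the mixture $\int_\Theta P_M^d(Y\mid \bm{\theta}, x)\, P_M^d(\bm{\theta}\mid X^d_1,Y^d_1,\ldots,X^d_k,Y^d_k, x)\, d\bm{\theta}$, so I would need to argue that a mixture argmax can coincide with the $\theta^d$-argmax for all $x$ only when the mixing measure concentrates at $\theta^d$. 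This generically requires an identifiability-style assumption on $\bm{\theta}\mapsto P_M^d(Y\mid\bm{\theta},x)$ --- namely, that distinct concepts produce argmax regions that disagree on a positive-measure subset of $\mathcal{X}$ --- which I would either state as an implicit regularity condition or build into the proof by exploiting the injectivity of $d\mapsto \theta^d$ together with the richness of $\mathcal{X}$. This is the one technical gap I would need to patch in a fully rigorous write-up.
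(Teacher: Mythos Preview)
Your proposal follows essentially the same approach as the paper: the paper also writes the risk as $R(C_k)=\mathbb{E}_X\bigl[\sum_{y}(1-P_M^d(Y=y\mid\theta^d,X))\mathbbm{1}_{C_k(X)=y}\bigr]$ and concludes from the pointwise Bayes decision argument that $R(C_k)\ge R(C_{\theta^d})$, then reads the equality condition off \cref{eq:in-context-direct}. One remark: the gap you flag in the converse direction---that $g_{\mathrm{icl}}(x)$ matching the $\theta^d$-argmax for all $x$ need not force the posterior to concentrate at $\theta^d$ without some identifiability of $\bm{\theta}\mapsto P_M^d(Y\mid\bm{\theta},x)$---is real, and the paper's own proof simply asserts ``which only holds when $P_M^d(\theta^d\mid\ldots)=1$'' without justifying it; so you are being more careful than the original on exactly the step you identified as delicate.
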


\begin{proof}
    Recall that in \cref{eq:in-context-direct}, we have
    \begin{align*}
        P_{M}^d(Y|X^d_1, Y^d_1, ..., X^d_k, Y^d_k, X) = \int_{\Theta} P_M^d(Y|\bm{\theta}, X) P_M^d(\bm{\theta}|X^d_1, Y^d_1, ..., X^d_k, Y^d_k, X) d\bm{\theta}.
    \end{align*}
    By \cref{app:prop:bayes-optimal-direct}, $\argmax_{y\in\mathcal{Y}} P_M^d(Y=y|\theta^d, X)$ is the Bayes optimal classifier. Let $C_{\bm{\theta}}(X) = \argmax_{y\in\mathcal{Y}} P_M^d(Y=y|\bm{\theta}, X)$, then the risk is defined as the probability of misclassification
    \begin{align*}
        R(C_{\bm{\theta}}) = P(C_{\bm{\theta}}(X) \neq Y) = \mathbb{E}_{XY}[\mathbbm{1}_{C_{\bm{\theta}}(X) \neq Y}].
    \end{align*}
    Denote the in-context learning classifier $\argmax_{y\in\mathcal{Y}}P_{M}^d(Y=y|X^d_1, Y^d_1, ..., X^d_k, Y^d_k, X)$ by $C_k(X)$.
    We then have 
    \begin{align*}
        R(C_k) = \mathbb{E}_{XY}[\mathbbm{1}_{C_k(X) \neq Y}] = \mathbb{E}_{X}[\sum_{y\in\mathcal{Y}}(1-P_{M}^d(Y=y|\theta^d, X))\mathbbm{1}_{C_k(X) = y}].
    \end{align*}
    Such risk is minimized if and only if $C_k(X) = C_{\theta^d}(X)$, which only holds when $P_M^d(\theta^d|X^d_1, Y^d_1, ..., X^d_k, Y^d_k, X=x)=1$ for all $x \in \mathcal{X}$.
\end{proof}

\subsection{Channel direction} 
\label{app:channel}

\begin{assumption}
    \label{app:ass:2}
    Assume that $P_M(X) = P(X)$, and $P_M^d(X|\bm{\theta}, Y) \propto P(X|\bm{\theta}, Y)$ for the $Y \shortrightarrow X \shortleftarrow \bm{\theta}$ direction.
\end{assumption}

\begin{proposition}
    \label{app:prop:bayes-optimal-channel}
    If task $d$ follows the $Y \shortrightarrow X \shortleftarrow \bm{\theta}$ causal direction, $\argmax_{y\in\mathcal{Y}} P_M^d(X|\theta^d, Y=y)$ is the Bayes optimal classifier when the label assignment is balanced.
\end{proposition}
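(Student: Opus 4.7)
The plan is to mirror the strategy used for \cref{app:prop:bayes-optimal-direct}, but to apply Bayes' rule so that the class-conditional likelihood $P_M^d(X \mid \theta^d, Y)$ plays the role that $P_M^d(Y \mid \theta^d, X)$ played in the direct case. The three ingredients I expect to combine are: (i) the structural equation $X = g(Y, \theta^d, \bm{\epsilon})$, which for a fixed task $d$ gives the causal factorization $P^d(X \mid Y) = P(X \mid \theta^d, Y)$; (ii) the balanced-label hypothesis, which makes the marginal $P^d(Y=y)$ constant in $y$; and (iii) \cref{app:ass:2}, which relates the LLM's conditional to the underlying data conditional.

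First I would unfold the Bayes optimal classifier as $\argmax_{y} P^d(Y=y \mid X) = \argmax_{y} P^d(X \mid Y=y)\, P^d(Y=y)$, using Bayes' rule and dropping the $y$-independent normalizer $P^d(X)$. The balanced-label hypothesis collapses this to $\argmax_{y} P^d(X \mid Y=y)$. Next, because the data for task $d$ are generated by the structural equation with $\theta^d$ held fixed, the class-conditional likelihood coincides with the $\theta^d$-conditioned distribution, i.e.\ $P^d(X \mid Y=y) = P(X \mid \theta^d, Y=y)$. Finally, \cref{app:ass:2} identifies this with $P_M^d(X \mid \theta^d, Y=y)$ up to a factor that I will need to argue is constant in $y$, giving the desired equality of $\argmax$'s.

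The main obstacle I foresee is step (iii): justifying that the proportionality in \cref{app:ass:2} preserves $\argmax_{y}$. In the direct direction the two densities live on the finite simplex over $\mathcal{Y}$ at fixed $(\theta, X)$, so the proportionality constant automatically depends only on $(\theta, X)$ and cancels; in the channel direction the densities live on $\mathcal{X}$ at fixed $(\theta, Y)$, so a priori the hidden constant could depend on $y$ and the argument would fail. I would resolve this by reading $P_M^d(X \mid \theta, Y)$ as a properly normalized distribution over $\mathcal{X}$ (both sides then integrate to one, forcing the constant to be unity), or, if only unnormalized LLM scores are intended, by invoking the same uniform preprocessing $\tau^d$ across labels so that the partition function does not vary with $y$. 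Once this normalization point is dispatched, the remainder is a one-line chain of equalities, exactly parallel in spirit to the proof of \cref{app:prop:bayes-optimal-direct}.
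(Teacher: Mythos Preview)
Your proposal is correct and follows essentially the same route as the paper: invoke the structural equation to identify $P^d(X\mid Y)$ with $P(X\mid\theta^d,Y)$, use the balanced-label assumption to reduce $\argmax_y P^d(Y=y\mid X)$ to $\argmax_y P^d(X\mid Y=y)$, and then appeal to \cref{app:ass:2} to pass to $P_M^d$. Your discussion of the normalization issue in step~(iii) is in fact more careful than the paper's own argument, which simply asserts the equality of argmaxes from \cref{app:ass:2} without comment.
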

\begin{proof}
    Since the data generation of the task $d$ can be written as $X = g(Y, \theta^d, \bm{\epsilon})$, we have 
    \begin{align*}
        P^d(X|Y) = P(X|\theta^d, Y)
    \end{align*}
    When the label is balanced, i.e. $P^d(Y) = \frac{1}{|\mathcal{Y}|}$, we have 
    \begin{align*}
        P^d(Y|X) = \frac{P^d(X|Y)P^d(Y)}{P(X)} \propto  P^d(X|Y)
    \end{align*}
    And by \cref{app:ass:2},  we have
    \begin{align*}
        \argmax_{y\in\mathcal{Y}} P_M^d(X|\theta^d, Y=y) = \argmax_{y\in\mathcal{Y}} P(X|\theta^d, Y=y).
    \end{align*}
    Thus $\argmax_{y\in\mathcal{Y}} P_M^d(X|\theta^d, Y=y) = \argmax_{y\in\mathcal{Y}}P^d(Y=y|X)$ is the Bayes optimal classifier.
\end{proof}

\begin{theorem}
    If task $d$ follows the $Y \shortrightarrow X \shortleftarrow \bm{\theta}$ direction, then the in-context learning classifier 
    \begin{align*}
        \argmax_{y\in\mathcal{Y}}P_{M}^d(X|Y^d_1, X^d_1, ..., Y^d_k, X^d_k, Y=y)
    \end{align*} 
    always has a higher or equal probability of misclassification to the Bayes optimal classifier $\argmax_{y\in\mathcal{Y}} P_M^d(X|\theta^d, Y=y)$. Equality only takes when
    \begin{align*}
        \forall y \in \mathcal{Y}, \; P_M^d(\theta^d|Y^d_1, X^d_1, ..., Y^d_k, X^d_k, Y=y)=1.
    \end{align*}
\end{theorem}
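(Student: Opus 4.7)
The plan is to mirror the proof of \cref{thm:in-context-classifier} (i.e.\ the direct-direction theorem in the appendix), substituting the channel-direction ingredients. First I would invoke \cref{eq:in-context-channel} to decompose the in-context predictive distribution as a mixture over the latent concept:
\begin{align*}
P_{M}^d(X\mid Y^d_1, X^d_1, \dots, Y^d_k, X^d_k, Y)
= \int_{\Theta} P_M^d(X\mid \bm{\theta}, Y)\, P_M^d(\bm{\theta}\mid Y^d_1, X^d_1, \dots, Y^d_k, X^d_k, Y)\, d\bm{\theta}.
\end{align*}
This identifies the in-context classifier with a posterior-weighted average of the family of channel classifiers $\{\argmax_y P_M^d(X\mid \bm{\theta}, Y=y)\}_{\bm{\theta}\in\Theta}$, of which the one at $\bm{\theta}=\theta^d$ is the Bayes optimal classifier by \cref{app:prop:bayes-optimal-channel} (under the balanced-label assumption already invoked there).

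Next, following the risk-based argument used for the direct direction, I would define for each $\bm{\theta}$ the classifier $C_{\bm{\theta}}(X)=\argmax_{y\in\mathcal{Y}} P_M^d(X\mid \bm{\theta}, Y=y)$ and its misclassification risk $R(C_{\bm{\theta}})=\mathbb{E}_{XY}[\mathbbm{1}_{C_{\bm{\theta}}(X)\ne Y}]$, and similarly let $C_k$ denote the in-context classifier on the left-hand side above. Writing
\begin{align*}
R(C_k) = \mathbb{E}_X\Bigl[\sum_{y\in\mathcal{Y}} \bigl(1 - P^d(Y=y\mid X)\bigr)\,\mathbbm{1}_{C_k(X)=y}\Bigr],
\end{align*}
the pointwise optimality of $C_{\theta^d}$ (from \cref{app:prop:bayes-optimal-channel}) gives $R(C_{\theta^d}) \le R(C_k)$, with equality iff $C_k(X)=C_{\theta^d}(X)$ almost surely in $X$.

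Finally, I would translate the condition $C_k \equiv C_{\theta^d}$ back to a statement about the posterior over $\bm{\theta}$. Because the mixture representation makes $\argmax_y P_M^d(X\mid Y=y,\text{demos})$ coincide with the Bayes optimal $\argmax_y P_M^d(X\mid \theta^d, Y=y)$ for every $X$ (hence for every conditioning value) exactly when the posterior over $\bm{\theta}$ places all its mass at $\theta^d$, the equality condition becomes
\begin{align*}
\forall y\in\mathcal{Y},\quad P_M^d(\theta^d\mid Y^d_1, X^d_1, \dots, Y^d_k, X^d_k, Y=y)=1,
\end{align*}
as stated. The main subtlety is this last step: unlike the direct direction, here the conditioning variable in the in-context expression is $Y$ while the quantity being compared is a distribution over $X$, so I need to be careful that ``pointwise agreement of the argmax'' in $X$ is really what forces posterior concentration for every $y$. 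My plan is to argue by contradiction --- if the posterior at some $y$ were supported on more than $\{\theta^d\}$ with positive measure on a region where the argmax over $y$ of $P_M^d(X\mid \bm{\theta},Y)$ differs from the Bayes optimal one, one can exhibit an $X$ on which $C_k$ and $C_{\theta^d}$ disagree, which strictly increases the risk. That contradiction step is where the proof is least mechanical and will need the most care.
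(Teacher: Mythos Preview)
Your proposal is correct and follows essentially the same approach as the paper: invoke \cref{eq:in-context-channel}, use \cref{app:prop:bayes-optimal-channel} for Bayes optimality of $C_{\theta^d}$, express the risk of $C_k$ pointwise over $X$, and conclude that equality forces posterior concentration at $\theta^d$ for every $y\in\mathcal{Y}$. The paper is actually terser on the final equivalence---it simply asserts that $C_k=C_{\theta^d}$ holds only when the posterior concentrates---so the contradiction argument you plan is more careful than what the paper itself provides.
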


\begin{proof}
    This theorem can be proved similarly as \cref{app:thm:1}. Recall that in \cref{eq:in-context-channel}, we have
    \begin{align*}
        P_{M}^d(X|Y^d_1, X^d_1, ..., Y^d_k, X^d_k, Y) = \int_{\Theta} P_M^d(X|\bm{\theta}, Y) P_M^d(\bm{\theta}|Y^d_1, X^d_1, ..., Y^d_k, X^d_k, Y) d\bm{\theta}.
    \end{align*}
    By \cref{app:prop:bayes-optimal-channel}, $\argmax_{y\in\mathcal{Y}} P_M^d(X|\theta^d, Y=y)$ is the Bayes optimal classifier. Let $C_{\bm{\theta}}(X) = \argmax_{y\in\mathcal{Y}} P_M^d(X|\bm{\theta}, Y=y)$, then the risk is defined as the probability of misclassification
    \begin{align*}
        R(C_{\bm{\theta}}) = P(C_{\bm{\theta}}(X) \neq Y) = \mathbb{E}_{XY}[\mathbbm{1}_{C_{\bm{\theta}}(X) \neq Y}].
    \end{align*}
    Denote the in-context learning classifier $\argmax_{y\in\mathcal{Y}}P_{M}^d(X|Y^d_1, X^d_1, ..., Y^d_k, X^d_k, Y=y)$ by $C_k(X)$.
    We then have 
    \begin{align*}
        R(C_k) = \mathbb{E}_{XY}[\mathbbm{1}_{C_k(X) \neq Y}] = \mathbb{E}_{X}[\sum_{y\in\mathcal{Y}}(1-P_{M}^d(X|\theta^d, Y=y))\mathbbm{1}_{C_k(X) = y}].
    \end{align*}
    Such risk is minimized if and only if $C_k(X) = C_{\theta^d}(X)$, which only holds when $P_M^d(\theta^d|Y^d_1, X^d_1, ..., Y^d_k, X^d_k, Y=y)=1$ for all $y \in \mathcal{Y}$.
\end{proof}

\subsection{Method}
\label{app:method}

\begin{proposition}
    (\cref{prop:loss})
    When $\mathcal{L}(\hat{\theta}^d)$ is minimized, $P_M^d(Y|\hat{\theta}^d, X) = P(Y|\theta^d, X)$ for $X \shortrightarrow Y \shortleftarrow \bm{\theta}$, and $P_M^d(X|\hat{\theta}^d, Y) = P(X|\theta^d, Y)$ for $Y \shortrightarrow X \shortleftarrow \bm{\theta}$. If the LLM $M$ is invertible, then $\hat{\theta}^d = \theta^d$.
\end{proposition}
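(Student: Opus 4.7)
I would handle the two causal directions in parallel, since the arguments are symmetric. Take the $X\shortrightarrow Y\shortleftarrow\bm\theta$ case first. Data $(X,Y)$ drawn from task $d$ follow the structural equation $Y=f(X,\theta^d,\bm\epsilon)$, so their joint law factors as $P(X,Y\mid\theta^d)=P(X)\,P(Y\mid\theta^d,X)$. The population loss can therefore be rewritten as an expected cross-entropy:
\begin{align*}
\mathcal{L}(\hat\theta^d)
= -\mathbb{E}_{X\sim P(X)}\,\mathbb{E}_{Y\sim P(Y\mid\theta^d,X)}\bigl[\log P_M^d(Y\mid\hat\theta^d,X)\bigr]
= \mathbb{E}_{X}\bigl[H\bigl(P(\cdot\mid\theta^d,X),\,P_M^d(\cdot\mid\hat\theta^d,X)\bigr)\bigr].
\end{align*}

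By Gibbs' inequality (equivalently, non-negativity of KL divergence), for each $X$ the inner cross-entropy is minimized exactly when $P_M^d(Y\mid\hat\theta^d,X)=P(Y\mid\theta^d,X)$, and the minimum value equals the entropy $H(P(\cdot\mid\theta^d,X))$, which does not depend on $\hat\theta^d$. Since this lower bound is attained by $\hat\theta^d=\theta^d$ itself (using Assumption~\ref{app:ass:1}: $P_M^d(Y\mid\theta,X)\propto P(Y\mid\theta,X)$, and both are probability distributions over $\mathcal{Y}$, so the proportionality constant is $1$), a global minimizer of $\mathcal{L}$ must satisfy $P_M^d(Y\mid\hat\theta^d,X)=P(Y\mid\theta^d,X)$ for $P(X)$-almost every $X$. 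The channel direction is identical after swapping the roles of $X$ and $Y$: the joint factors as $P(Y)\,P(X\mid\theta^d,Y)$, yielding $\mathcal{L}(\hat\theta^d)=\mathbb{E}_Y[H(P(\cdot\mid\theta^d,Y),P_M^d(\cdot\mid\hat\theta^d,Y))]$, minimized when $P_M^d(X\mid\hat\theta^d,Y)=P(X\mid\theta^d,Y)$.

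For the final claim, assume $M$ is invertible in the sense that the map $\bm\theta\mapsto P_M^d(\cdot\mid\bm\theta,X)$ (resp.\ $\bm\theta\mapsto P_M^d(\cdot\mid\bm\theta,Y)$) is injective. Combining the previous paragraph with Assumption~\ref{app:ass:1} gives $P_M^d(Y\mid\hat\theta^d,X)=P_M^d(Y\mid\theta^d,X)$ for all $X$, and injectivity then forces $\hat\theta^d=\theta^d$.

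\textbf{Main obstacle.} The calculus is routine; the delicate point is the status of the matching statement. The cross-entropy argument only yields equality of the conditionals $P$-almost everywhere in $X$, and the conclusion $\hat\theta^d=\theta^d$ requires a precise notion of invertibility that the paper leaves informal. I would therefore state the invertibility hypothesis explicitly (e.g., the pushforward $\bm\theta\mapsto P_M^d(\cdot\mid\bm\theta,X)$ is injective for $P(X)$-almost every $X$) so that the almost-sure equality of predictive distributions is enough to identify the latent parameter. A secondary point worth flagging is that Assumption~\ref{app:ass:1} is stated with a ``$\propto$'' that one must pin down to equality using the normalization of $P_M^d(\cdot\mid\bm\theta,X)$ over $\mathcal{Y}$; without this step the cross-entropy lower bound would not be attained at $\hat\theta^d=\theta^d$.
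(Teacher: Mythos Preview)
Your proposal is correct and follows essentially the same approach as the paper's own proof: decompose the population loss as entropy plus KL divergence (the paper writes $\mathcal{L}(\hat\theta^d)=H(P(Y\mid\theta^d,X))+\mathrm{KL}(P(Y\mid\theta^d,X)\,\|\,P_M^d(Y\mid\hat\theta^d,X))$ directly), observe that the entropy term is independent of $\hat\theta^d$ so the minimizer forces the KL to zero, and then invoke invertibility of $M$ to conclude $\hat\theta^d=\theta^d$. Your treatment is in fact more careful than the paper's---the paper's proof is four lines and does not address the $\propto$-versus-equality point in Assumption~\ref{app:ass:1}, the almost-everywhere caveat, or the precise meaning of ``invertible'' (it gestures at invertibility of the embedding matrix), all of which you rightly flag.
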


\begin{proof}
    The proof of this proposition is straightforward.

    Since 
    \begin{align*}
        \mathcal{L}(\hat{\theta}^d) = H(P(Y|\theta^d, X)) + KL(P(Y|\theta^d, X)||P_M^d(Y|\hat{\theta}^d, X))
    \end{align*}
    when $\mathcal{L}(\hat{\theta}^d)$ is minimized, we have $P_M^d(Y|\hat{\theta}^d, X) = P(Y|\theta^d, X)$ for $X \shortrightarrow Y \shortleftarrow \bm{\theta}$, and $P_M^d(X|\hat{\theta}^d, Y) = P(X|\theta^d, Y)$ for $Y \shortrightarrow X \shortleftarrow \bm{\theta}$. 

    If $M$ is invertible, since the embedding matrix is invertible with or without new concept tokens, $P_M^d(Y|\hat{\theta}, X) = P_M^d(Y|\hat{\theta}', X)$ implies that $\hat{\theta} = \hat{\theta}'$. Thus $\bm{\theta}$ is identifiable, which means $\hat{\theta}^d = \theta^d$.
\end{proof}

\section{Experiments}
\label{app:exp}

\textbf{Dateset.} In \cref{tab:prompt}, we show how we process the text classification datasets into prompts. For each dataset, we take at most 16384 examples from the training set for training, and uniformly sample at most 1000 examples from the test set to test the in-context learning performance. In \cref{tab:dataset-details}, we show the train size and test size we used for each dataset. We also list the set of diverse tasks trained with each dataset, which are denoted by their name in Huggingface datasets.\footnote{\url{https://huggingface.co/docs/datasets/index}} The license for SST2, ETHOS-SO and ETHOS-R is GNU General Public License v3. FPB is under a Creative Commons Attribution-NonCommercial-ShareAlike 3.0 Unported License. Note that these two datasets are hate speech detection datasets for different kinds of hate speech and contain many offensive texts. COLA is excerpted from the published works available on the website, and the copyright (where applicable) remains with the original authors or publishers. DBpedia is under a Creative Commons Attribution-ShareAlike License and the GNU Free Documentation License. EmoC and EmoS should be used for educational and research purposes only.

\begin{table}[t]
\scriptsize
\begin{center}
\caption{Prompt template and label mapping for the datasets we use. Since almost all sentences from ETHOS contain offensive content, we mask out the key offensive words in the examples below.}
\begin{tabular}{lll}
\toprule
Dataset &  Prompt & Label Mapping\\
\bottomrule
SST-2 & \begin{tabular}[c]{@{}l@{}}sentence: well worth revisiting as many times\\ positive\end{tabular} & negative/positive \\
\midrule
FPB & \begin{tabular}[c]{@{}l@{}l@{}}The company anticipates its turnover for the whole 2010 to\\ surpass that of the previous year when it was EUR 67.1 million .\\positive\end{tabular} & negative/neutral/positive \\
\midrule
COLA & \begin{tabular}[c]{@{}l@{}}It is this hat that I know the boy who is wearing.\\unacceptable\end{tabular} & acceptable/unacceptable \\
\midrule
DBPedia & \begin{tabular}[c]{@{}l@{}}The Nucet River is a tributary of the Chiojdeanca\\ River in Romania.\\NaturalPlace\end{tabular} & \begin{tabular}[c]{@{}l@{}}Album/Animal/Artist/\\Athlete/Building/Company/\\EducationalInstitution/Film/\\MeanOfTransportation/\\NaturalPlace/OfficeHolder/\\Plant/Village/WrittenWork\end{tabular}  \\
\midrule
EmoC & \begin{tabular}[c]{@{}l@{}}fast i mean fastingis a way of skipping meals i mena \\you move on too fast\\others\end{tabular} & angry/happy/others/sad \\
\midrule
EmoS & \begin{tabular}[c]{@{}l@{}}i feel this place was tragic\\sadness\end{tabular} & \begin{tabular}[c]{@{}l@{}} anger/fear/joy/love/\\sadness/surprise \end{tabular} \\
\midrule
ETHOS-SO & \begin{tabular}[c]{@{}l@{}}[Masked] should be removed from the face of the earth\\true\end{tabular} & false/true \\
\midrule
ETHOS-R & \begin{tabular}[c]{@{}l@{}}I hate being a [Masked], wish I was a [Masked]\\ and no [Masked] on earth existed\\false\end{tabular} & false/true \\
\bottomrule
\label{tab:prompt}
\end{tabular}
\end{center}
\end{table}

\begin{table}[t]
    \centering
    \scriptsize
    \begin{tabular}{cccc}
        \toprule
        datset $d$ & train size & test size & task set $\mathcal{S}$\\
        \midrule
        SST2 (glue-sst2) & 16384 & 1000 & \begin{tabular}[c]{@{}l@{}} glue-cola/glue-mnli/glue-qqp/\\glue-mrpc/glue-qnli/glue-rte/glue-sst2/glue-wnli \end{tabular}\\
        \midrule
        FPB (financial\_phrasebank) & 1811 & 453 &  \begin{tabular}[c]{@{}l@{}} glue-sst2/glue-mnli/math\_qa/sciq/\\social\_i\_qa/wino\_grande/glue-qqp/\\ag\_news/financial\_phrasebank/\\poem\_sentiment/anli/quarel/quartz/\\medical\_questions\_pairs/paws/dbpedia\_14 \end{tabular}\\
        \midrule
        COLA (cola-sst2) & 8551 & 1000 & \begin{tabular}[c]{@{}l@{}} glue-cola/glue-mnli/glue-qqp/glue-mrpc/\\glue-qnli/glue-rte/glue-sst2/glue-wnli \end{tabular}\\
        \midrule
        DBpedia (dbpedia\_14) & 16384 & 1000 & \begin{tabular}[c]{@{}l@{}} glue-sst2/glue-mnli/math\_qa/sciq/\\social\_i\_qa/wino\_grande/glue-qqp/\\ag\_news/financial\_phrasebank/\\poem\_sentiment/anli/quarel/quartz/\\medical\_questions\_pairs/paws/dbpedia\_14 \end{tabular}\\
        \midrule
        EmoC (emo) & 16384 & 1000 & \begin{tabular}[c]{@{}l@{}} glue-sst2/amazon\_polarity/\\financial\_phrasebank/poem\_sentiment/\\yelp\_polarity/glue-cola/blimp/ag\_news/\\dbpedia\_14/ethos/emo/emotion \end{tabular}\\
        \midrule
        EmoS (emotion) & 16000 & 1000 & \begin{tabular}[c]{@{}l@{}} glue-sst2/amazon\_polarity/\\financial\_phrasebank/poem\_sentiment/\\yelp\_polarity/glue-cola/blimp/ag\_news/\\dbpedia\_14/ethos/emo/emotion \end{tabular}\\
        \midrule
        ETHOS-SO (ethos-sexual\_orientation) & 346 & 87 & \begin{tabular}[c]{@{}l@{}} glue-sst2/amazon\_polarity/\\financial\_phrasebank/poem\_sentiment/\\yelp\_polarity/glue-cola/blimp/ag\_news/\\dbpedia\_14/ethos/emo/emotion \end{tabular}\\
        \midrule
        ETHOS-R (ethos-religion) & 346 & 87 & \begin{tabular}[c]{@{}l@{}} glue-sst2/amazon\_polarity/\\financial\_phrasebank/poem\_sentiment/\\yelp\_polarity/glue-cola/blimp/ag\_news/\\dbpedia\_14/ethos/emo/emotion \end{tabular}\\
        \bottomrule
    \end{tabular}
    \caption{Dataset details}
    \label{tab:dataset-details}
\end{table}

\textbf{Experiment details.} We run our experiments on A100, V100, and A6000 GPUs. We adopt a large portion of the code from the MetaICL repository \cite{min-etal-2022-metaicl}\footnote{\url{https://github.com/facebookresearch/MetaICL}}. The training takes around 20 to 40 hours on a single GPU. We use a learning rate of 1e-4 and a batch size of 16, and train for 10k steps in total. 

\textbf{Main results.} In \cref{tab:main-result}, we list the detailed results of our method and baselines with different LLMs on different datasets in \cref{fig:results}. 

\textbf{Causal direction results.} The detailed results with anti-causal direction (the opposite direction to what we described in \cref{sec:exp} are in \cref{tab:anti-causal-result}) are shown in \cref{tab:anti-causal-result}, corresponding to \cref{fig:causal} in the main text. 

\begin{figure}[tb]
    \centering
    \vspace{-1ex}
    \includegraphics[width=0.6\textwidth]{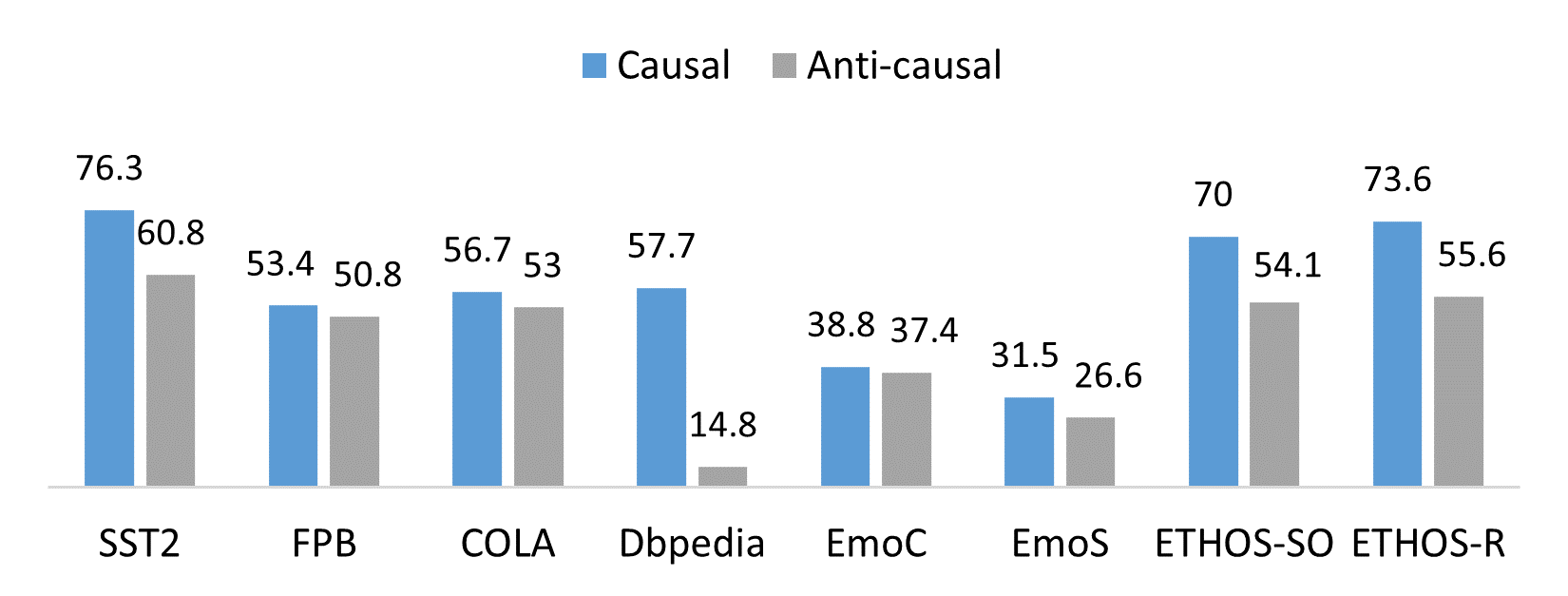}
    \vspace{-2ex}
    \caption{Accuracy of randomly selected demonstrations averaged over seven different LLMs except for GPT3-davinci, using the adopted \textit{causal} direction and the \textit{anti-causal} direction.}
    \vspace{-1ex}
    \label{fig:causal}
\end{figure}

\textbf{Other LLMs results.} The detailed results with other LLMs are shown in \cref{tab:other-llm-result}, corresponding to \cref{fig:other_llm} in the main text. 

\textbf{Random token results.} The detailed results with random tokens are shown in \cref{tab:random_tokens}, corresponding to \cref{fig:random_tokens} in the main text. 

\textbf{$k$-ablation study results.} The detailed results of $k$ ablation study are shown in \cref{tab:k-ablation}, corresponding to \cref{fig:k} in the main text. In this experiment, we do not reorder the selected demonstrations according to \cref{eq:reorder}, as we need to use GPT2-large for the reordering, and it cannot fit in all the demonstrations. Instead, we order the selected demonstrations from the largest $\hat{P}_M^d(\theta^d|X^d, Y^d)$ to the smallest. 

\textbf{$c$-ablation study results.}\textbf{} The detailed results of $c$ ablation study are shown in \cref{tab:c-ablation}, corresponding to \cref{fig:c} in the main text.

\textbf{Effect of using ground truth labels.} According to \cite{min2022rethinking}, the ground truth label is not necessary for demonstrations to have a good in-context learning performance, which we found is not entirely true for all the tasks. We compare our method with the randomly selected demonstration baseline under three scenarios: (a) \textbf{Original}: demonstrations with the correct labels; (b) \textbf{Random words}: using a random label projection map $\tau^d$ instead of a meaningful one. i.e., map each label to a fixed random word. In this case, the mapping from the input tokens $X$ to the labels $Y$ is still preserved; (c) \textbf{Random labels}: assign a random label to each demonstration, with the original label projection map $\tau^d$. As shown in \cref{fig:random}, by using a random label projection map or randomly assigning the labels, the performance of the randomly selected demonstration baseline drops considerably. And randomize the label assignment gives a larger performance drop than only using a random label projection map, which shows that the mapping between $X$ and $Y$ in the demonstrations matters. This indicates that in-context learning infers the mapping between $X$ and $Y$ from the demonstrations instead of merely invoking some learned function stored in the LLM parameters based on the appearance of $X$ and $Y$. We also show that the demonstrations selected by our method represent the $X-Y$ mapping better, as under the \textbf{Random words} condition, our method performs better than the random selection baseline, while our method does not improve the random selection baseline under the \textbf{Random labels} condition. The detailed results with random words and random labels are shown in \cref{tab:random}

\begin{figure}[tb]
    \centering
    \vspace{-1ex}
    \includegraphics[width=0.6\textwidth]{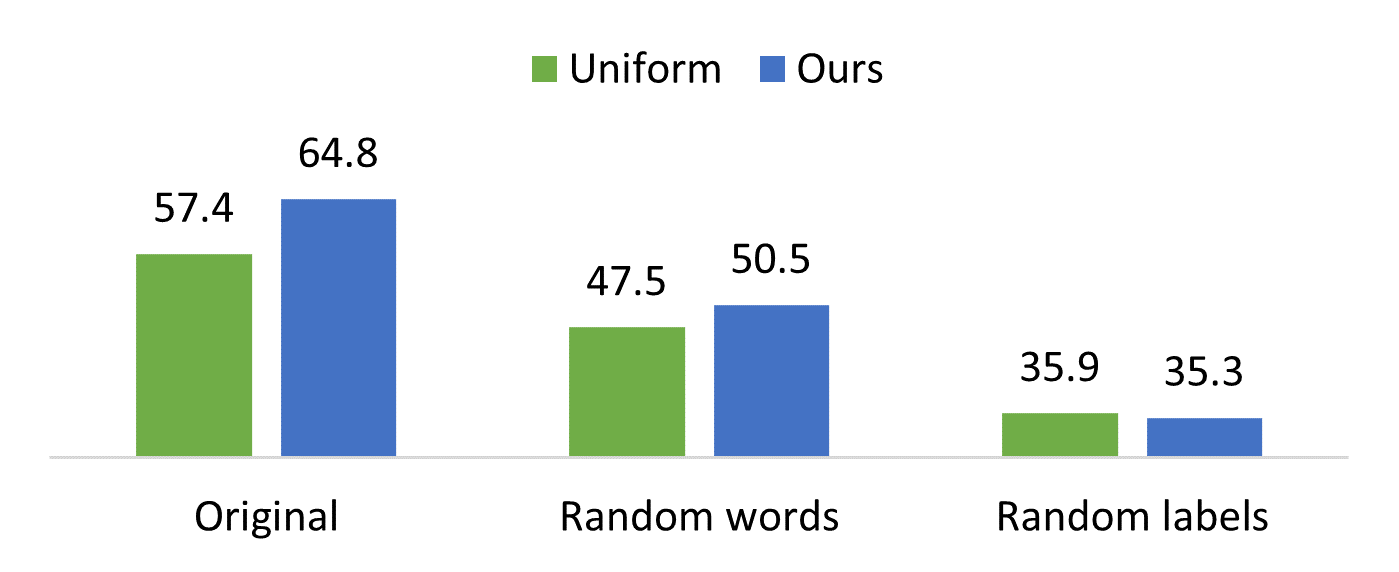}
    \vspace{-2ex}
    \caption{In-context learning accuracy of our method versus random selection baseline, with (a) ground truth labels (\textit{original}), (b) random label mapping (\textit{random words}), or random label assignments (\textit{random label}), averaged over all eight datasets. Numbers are obtained with GPT2-large.}
    \vspace{-1ex}
    \label{fig:random}
\end{figure}

\textbf{Optimal performance} As stated in \cref{thm:in-context-classifier}, the optimal performance of an in-context learning classifier is the Bayes optimal classifier $\argmax_{y\in\mathcal{Y}} P_M^d(Y=y|\theta^d, X)$, which is approximated by using the learned concept tokens as prefixes. Note that this approximated Bayes optimal classifier cannot be transferred across different LLMs, as the learned concept tokens embeddings are aligned with a specific LLM. The advantage of in-context learning with our method is that the demonstrations can be transferred to any LLMs without training. Here we only compare the accuracy of in-context learning with our method and the approximated Bayes optimal classifier using GPT2-large, as it is the LLM that concept tokens are fine-tuned with. As shown in \cref{fig:optimal}, our method comes close to the optimal accuracy on many datasets, while there are some datasets that our method is lagging. This indicates that there are two ways to improve our method: the first is to improve the performance of the optimal classifier, by introducing a better latent concept learning algorithm. The other way is to reduce the performance gap between our method and the optimal classifier, by improving the demonstration selection algorithm. The detailed results using the learned concept tokens as prefixes are shown in \cref{tab:prefix}. 

\textbf{Reordering results.} We reorder the selected demonstrations to maximize the posterior of the concept tokens:
\begin{align}
    \label{eq:reorder}
    \argmax_{\pi \in \Pi} \hat{P}_M^d(\theta^d|\pi((X^d_1, Y^d_1), ..., (X^d_k, Y^d_k)))
\end{align}
Where $\pi((X^d_1, Y^d_1), ..., (X^d_k, Y^d_k))$ is a permutation of $(X^d_1, Y^d_1), ..., (X^d_k, Y^d_k)$. $\Pi$ is the set of all possible permutations of the $k$ demonstrations. The detailed results with and without reordering are shown in \cref{tab:reorder}, corresponding to \cref{fig:reorder}.  

\begin{figure}[tb]
    \centering
    \vspace{-1ex}
    \includegraphics[width=0.7\textwidth]{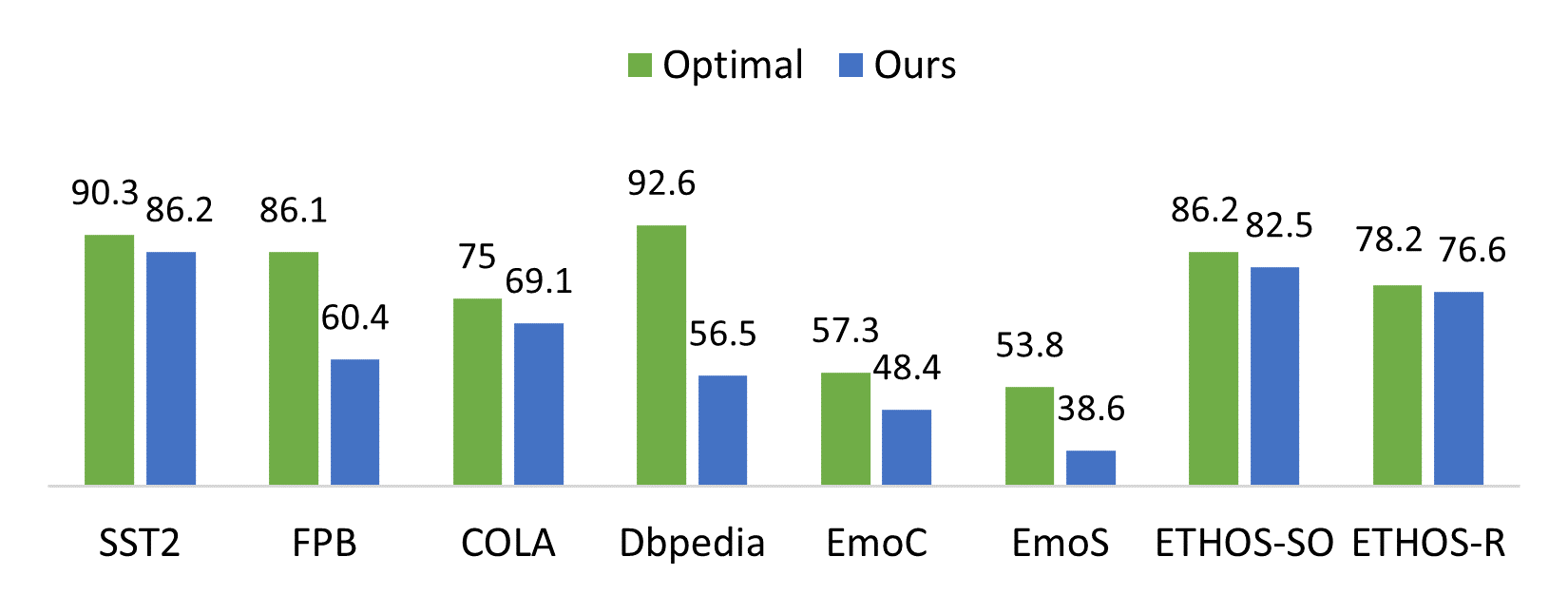}
    \vspace{-2ex}
    \caption{Accuracy of in-context learning using our method versus the theoretical maximum accuracy obtained using the learned concept tokens as prefixes. Numbers are obtained with GPT2-large.}
    \vspace{-1ex}
    \label{fig:optimal}
\end{figure}

\begin{figure}[tb]
    \centering
    \vspace{-1ex}
    \includegraphics[width=0.7\textwidth]{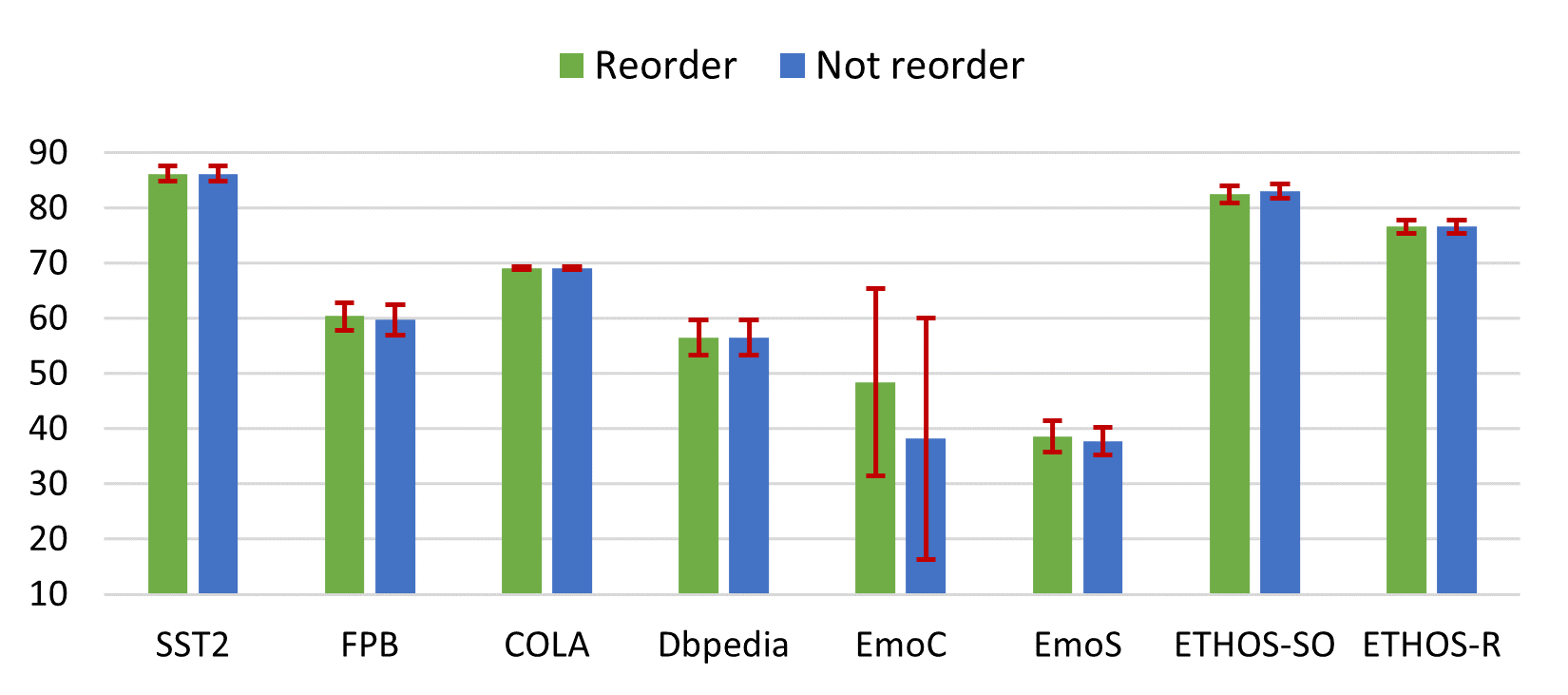}
    \vspace{-2ex}
    \caption{In-context learning accuracy of our method versus random selection baseline, with and without reordering. The red error bars represent the standard deviation across five runs. Numbers are obtained with GPT2-large.}
    \vspace{-1ex}
    \label{fig:reorder}
\end{figure}

\begin{table}[t]
    \scriptsize
    \centering
    \caption{Accuracy of selected demonstration. Our demonstrations are selected using GPT2-large, and the same set of demonstrations is applied to all different LLMs. All LLMs are pre-trained only with the language modeling objective, while the pre-training data size of GPT2s is much smaller than GPT3s.}
    \begin{tabular}{cccccccccccccccc}
    \toprule
    LLM & Method & SST2 & FPB & COLA & DBpedia & EmoC & EmoS & ETHOS-SO & ETHOS-R & Avg\\
    \midrule
    GPT2 & Uniform & 69.7 \tiny{$\pm$ 1.8} & 52.9 \tiny{$\pm$ 2.3} & 61.9 \tiny{$\pm$ 1.4} & 48.0 \tiny{$\pm$ 0.7} & 35.3 \tiny{$\pm$ 1.7} & 26.4 \tiny{$\pm$ 1.0} & 64.1 \tiny{$\pm$ 4.8} & 71.0 \tiny{$\pm$ 1.8} & 53.7\\
    (124M) & Similar & 69.5 \tiny{$\pm$ 0.6} & 55.9 \tiny{$\pm$ 1.7} & 63.2 \tiny{$\pm$ 1.2} & 44.7 \tiny{$\pm$ 3.1} & 36.4 \tiny{$\pm$ 2.0} & 26.6 \tiny{$\pm$ 1.3} & 77.7 \tiny{$\pm$ 2.7} & 80.0 \tiny{$\pm$ 3.7} & 56.8\\
    \rowcolor{Gainsboro!60}
    & \textbf{Ours} & 76.8 \tiny{$\pm$ 2.9} & 64.5 \tiny{$\pm$ 3.2} & 69.1 \tiny{$\pm$ 0.2} & 53.5 \tiny{$\pm$ 2.95} & 37.2 \tiny{$\pm$ 11.1} & 30.6 \tiny{$\pm$ 4.8} & 80.9 \tiny{$\pm$ 1.9} & 76.8 \tiny{$\pm$ 2.6} & 61.2\\
    GPT2-m & Uniform & 70.8 \tiny{$\pm$ 1.3} & 52.0 \tiny{$\pm$ 1.7} & 57.8 \tiny{$\pm$ 1.3} & 49.3 \tiny{$\pm$ 2.0} & 34.2 \tiny{$\pm$ 1.8} & 34.2 \tiny{$\pm$ 1.8} & 76.3 \tiny{$\pm$ 4.9} & 74.7 \tiny{$\pm$ 2.2} & 56.2\\
    (355M) & Similar & 75.0 \tiny{$\pm$ 1.9} & 57.7 \tiny{$\pm$ 2.0} & 57.5 \tiny{$\pm$ 2.2} & 47.9 \tiny{$\pm$ 6.0} & 37.2 \tiny{$\pm$ 3.6} & 35.2 \tiny{$\pm$ 1.8} & 86.9 \tiny{$\pm$ 2.9} & 84.6 \tiny{$\pm$ 4.3} & 60.3\\
    \rowcolor{Gainsboro!60}
    & \textbf{Ours} & 81.2 \tiny{$\pm$ 1.3} & 59.3 \tiny{$\pm$ 4.3} & 69.0 \tiny{$\pm$ 0.2} & 52.9 \tiny{$\pm$ 2.3} & 40.4 \tiny{$\pm$ 21.5} & 37.2 \tiny{$\pm$ 2.4} & 83.7 \tiny{$\pm$ 1.1} & 76.8 \tiny{$\pm$ 1.1} & 62.6\\
    GPT2-l & Uniform & 77.1 \tiny{$\pm$ 1.2} & 51.3 \tiny{$\pm$ 2.4} & 62.7 \tiny{$\pm$ 0.8} & 54.4 \tiny{$\pm$ 0.9} & 38.7 \tiny{$\pm$ 2.1} & 34.5 \tiny{$\pm$ 1.2} & 67.6 \tiny{$\pm$ 4.3} & 72.9 \tiny{$\pm$ 2.8} & 57.4\\
    (774M) & Similar & 80.7 \tiny{$\pm$ 1.6} & 54.8 \tiny{$\pm$ 3.8} & 50.9 \tiny{$\pm$ 1.4} & 51.1 \tiny{$\pm$ 5.2} & 39.9 \tiny{$\pm$ 2.6} & 35.1 \tiny{$\pm$ 2.1} & 80.9 \tiny{$\pm$ 2.8} & 84.4 \tiny{$\pm$ 2.6} & 59.7\\
    \rowcolor{Gainsboro!60}
    & \textbf{Ours} & 86.2 \tiny{$\pm$ 1.4} & 60.4 \tiny{$\pm$ 2.5} & 69.1 \tiny{$\pm$ 0.2} & 56.5 \tiny{$\pm$ 3.2} & 48.4 \tiny{$\pm$ 17.0} & 38.6 \tiny{$\pm$ 2.8} & 82.5 \tiny{$\pm$ 1.5} & 76.6 \tiny{$\pm$ 1.2} & 64.8\\
    GPT2-xl & Uniform & 74.7 \tiny{$\pm$ 0.9} & 53.2 \tiny{$\pm$ 1.9} & 55.8 \tiny{$\pm$ 1.6} & 53.0 \tiny{$\pm$ 1.9} & 38.2 \tiny{$\pm$ 1.5} & 38.2 \tiny{$\pm$ 1.5} & 67.8 \tiny{$\pm$ 6.4} & 72.6 \tiny{$\pm$ 4.1} & 56.7\\
    (1.5B) & Similar & 80.6 \tiny{$\pm$ 1.3} & 53.0 \tiny{$\pm$ 2.5} & 55.0 \tiny{$\pm$ 2.5} & 51.6 \tiny{$\pm$ 5.9} & 39.9 \tiny{$\pm$ 2.0} & 32.9 \tiny{$\pm$ 2.1} & 82.8 \tiny{$\pm$ 2.2} & 83.9 \tiny{$\pm$ 4.5} & 60\\
    \rowcolor{Gainsboro!60}
    & \textbf{Ours} & 83.1 \tiny{$\pm$ 3.6} & 62.0 \tiny{$\pm$ 2.5} & 68.9 \tiny{$\pm$ 0.2} & 58.6 \tiny{$\pm$ 3.3} & 43.6 \tiny{$\pm$ 16.4} & 43.6 \tiny{$\pm$ 16.4} & 83.0 \tiny{$\pm$ 1.3} & 77.9 \tiny{$\pm$ 1.3} & 65.1\\
    GPT3-a & Uniform & 76.9 \tiny{$\pm$ 0.7} & 56.6 \tiny{$\pm$ 1.1} & 53.1 \tiny{$\pm$ 1.8} & 62.1 \tiny{$\pm$ 1.4} & 38.6 \tiny{$\pm$ 1.4} & 27.7 \tiny{$\pm$ 1.3} & 65.5 \tiny{$\pm$ 5.7} & 74.0 \tiny{$\pm$ 3.0} & 56.8\\
    (350M) & Similar & 78.7 \tiny{$\pm$ 1.0} & 52.2 \tiny{$\pm$ 2.7} & 53.1 \tiny{$\pm$ 1.8} & 54.6 \tiny{$\pm$ 1.7} & 42.4 \tiny{$\pm$ 3.5} & 37.2 \tiny{$\pm$ 1.1} & 84.1 \tiny{$\pm$ 2.2} & 87.8 \tiny{$\pm$ 3.5} & 61.3 \\
    \rowcolor{Gainsboro!60}
    & \textbf{Ours} & 85.4 \tiny{$\pm$ 1.7} & 61.9 \tiny{$\pm$ 10.5} & 58.2 \tiny{$\pm$ 7.0} & 64.0 \tiny{$\pm$ 4.4} & 43.0 \tiny{$\pm$ 7.2} & 37.9 \tiny{$\pm$ 2.3} & 84.4 \tiny{$\pm$ 1.4} & 78.9 \tiny{$\pm$ 0.9} & 64.2\\
    GPT3-b & Uniform & 80.8 \tiny{$\pm$ 0.6} & 55.2 \tiny{$\pm$ 3.3} & 46.8 \tiny{$\pm$ 2.0} & 66.5 \tiny{$\pm$ 1.4} & 42.0 \tiny{$\pm$ 0.7} & 27.0 \tiny{$\pm$ 1.2} & 71.0 \tiny{$\pm$ 4.6} & 72.6 \tiny{$\pm$ 3.1} & 57.7\\
    (1.3B) & Similar & 83.9 \tiny{$\pm$ 1.3} & 56.2 \tiny{$\pm$ 2.3} & 45.1 \tiny{$\pm$ 1.8} & 59.8 \tiny{$\pm$ 1.8} & 42.9 \tiny{$\pm$ 3.5} & 38.1 \tiny{$\pm$ 1.7} & 86.7 \tiny{$\pm$ 3.0} & 86.4 \tiny{$\pm$ 3.0} & 62.4\\
    \rowcolor{Gainsboro!60}
    & \textbf{Ours} & 87.3 \tiny{$\pm$ 2.0} & 64.3 \tiny{$\pm$ 5.9} & 67.2 \tiny{$\pm$ 0.9} & 70.2 \tiny{$\pm$ 3.2} & 43.6 \tiny{$\pm$ 13.0} & 38.9 \tiny{$\pm$ 5.0} & 84.6 \tiny{$\pm$ 0.9} & 78.9 \tiny{$\pm$ 1.2} & 66.9\\
    GPT3-c & Uniform & 84.2 \tiny{$\pm$ 1.4} & 52.6 \tiny{$\pm$ 1.8} & 59.1 \tiny{$\pm$ 1.5} & 70.6 \tiny{$\pm$ 0.8} & 44.3 \tiny{$\pm$ 2.5} & 32.3 \tiny{$\pm$ 1.9} & 77.5 \tiny{$\pm$ 4.7} & 77.5 \tiny{$\pm$ 0.6} & 62.3\\
    (6.7B) & Similar & 85.7 \tiny{$\pm$ 1.4} & 62.2 \tiny{$\pm$ 0.9} & 58.0 \tiny{$\pm$ 1.7} & 62.2 \tiny{$\pm$ 2.0} & 47.4 \tiny{$\pm$ 4.3} & 39.8 \tiny{$\pm$ 1.7} & 89.2 \tiny{$\pm$ 1.4} & 89.7 \tiny{$\pm$ 1.9} & 66.8\\
    \rowcolor{Gainsboro!60}
    & \textbf{Ours} & 88.8 \tiny{$\pm$ 0.7} & 64.1 \tiny{$\pm$ 5.7} & 69.0 \tiny{$\pm$ 0.3} & 73.6 \tiny{$\pm$ 2.9} & 50.3 \tiny{$\pm$ 11.9} & 43.1 \tiny{$\pm$ 4.6} & 86.2 \tiny{$\pm$ 0.0} & 78.2 \tiny{$\pm$ 0.0} & 69.2\\
    GPT3-d & Uniform & 86.5 \tiny{$\pm$ 0.9} & 59.2 \tiny{$\pm$ 2.4} & 45.5 \tiny{$\pm$ 2.8} & 73.6 \tiny{$\pm$ 1.9} & 39.4 \tiny{$\pm$ 0.7} & 40.6 \tiny{$\pm$ 1.7} & 77.2 \tiny{$\pm$ 2.6} & 76.8 \tiny{$\pm$ 3.5} & 62.4\\
    (175B) & Similar & 88.5 \tiny{$\pm$ 0.8} & 55.4 \tiny{$\pm$ 3.3} & 45.4 \tiny{$\pm$ 1.5} & 67.2 \tiny{$\pm$ 1.8} & 37.6 \tiny{$\pm$ 1.6} & 39.8 \tiny{$\pm$ 1.4} & 86.9 \tiny{$\pm$ 2.4} & 89.0 \tiny{$\pm$ 3.8} & 63.7\\
    \rowcolor{Gainsboro!60}
    & \textbf{Ours} & 87.8 \tiny{$\pm$ 3.4} & 62.7 \tiny{$\pm$ 3.3} & 58.5 \tiny{$\pm$ 8.2} & 75.5 \tiny{$\pm$ 2.4} & 41.3 \tiny{$\pm$ 3.6} & 42.7 \tiny{$\pm$ 3.9} & 85.1 \tiny{$\pm$ 0.0} & 79.3 \tiny{$\pm$ 0.0} & 66.6\\
    Avg & Uniform & 77.6 & 54.1 & 55.3 & 59.7 & 38.8 & 32.6 & 70.9 & 74.0 & 57.9\\
    & Similar & 80.3 & 55.9 & 53.5 & 54.9 & 40.5 & 35.6 & 84.4 & 85.7 & 61.4\\
    \rowcolor{Gainsboro!60}
    & \textbf{Ours} & 84.6 & 62.4 & 66.1 & 63.1 & 43.5 & 39.1 & 83.8 & 77.9 & 65.0\\
    \bottomrule
    \end{tabular}
    \label{tab:main-result}
\end{table}

\begin{table}[t]
    \scriptsize
    \centering
    \caption{Accuracy of selected demonstration. Our demonstrations are selected using GPT2-large, and the same set of demonstrations is applied to all different LLMs. All LLMs are pre-trained only with the language modeling objective, while the pre-training data size of GPT2s is much smaller than GPT3s.}
    \begin{tabular}{cccccccccccccccc}
    \toprule
    LLM & Method & SST2 & FPB & COLA & DBpedia & EmoC & EmoS & ETHOS-SO & ETHOS-R & Avg\\
    \midrule
    GPT2 & Uniform & 69.7 \tiny{$\pm$ 1.8} & 52.9 \tiny{$\pm$ 2.3} & 61.9 \tiny{$\pm$ 1.4} & 48.0 \tiny{$\pm$ 0.7} & 35.3 \tiny{$\pm$ 1.7} & 26.4 \tiny{$\pm$ 1.0} & 64.1 \tiny{$\pm$ 4.8} & 71.0 \tiny{$\pm$ 1.8} & 53.7\\
    (124M) & Random & 69.8 \tiny{$\pm$ 3.3} & 51.1 \tiny{$\pm$ 1.7} & 69.0 \tiny{$\pm$ 0.1} & 49.0 \tiny{$\pm$ 4.5} & 33.7 \tiny{$\pm$ 15.5} & 24.2 \tiny{$\pm$ 7.6} & 66.4 \tiny{$\pm$ 17.5} & 66.2 \tiny{$\pm$ 16.2} & 53.7\\
    \rowcolor{Gainsboro!60}
    & \textbf{Ours} & 76.8 \tiny{$\pm$ 2.9} & 64.5 \tiny{$\pm$ 3.2} & 69.1 \tiny{$\pm$ 0.2} & 53.5 \tiny{$\pm$ 2.95} & 37.2 \tiny{$\pm$ 11.1} & 30.6 \tiny{$\pm$ 4.8} & 80.9 \tiny{$\pm$ 1.9} & 76.8 \tiny{$\pm$ 2.6} & 61.2\\
    GPT2-l & Uniform & 77.1 \tiny{$\pm$ 1.2} & 51.3 \tiny{$\pm$ 2.4} & 62.7 \tiny{$\pm$ 0.8} & 54.4 \tiny{$\pm$ 0.9} & 38.7 \tiny{$\pm$ 2.1} & 34.5 \tiny{$\pm$ 1.2} & 67.6 \tiny{$\pm$ 4.3} & 72.9 \tiny{$\pm$ 2.8} & 57.4\\
    (774M) & Random & 81.9 \tiny{$\pm$ 4.5} & 46.5 \tiny{$\pm$ 4.7} & 64.9 \tiny{$\pm$ 7.8} & 50.3 \tiny{$\pm$ 4.3} & 42.5 \tiny{$\pm$ 16.7} & 36.1 \tiny{$\pm$ 6.5} & 67.6 \tiny{$\pm$ 20.4} & 67.8 \tiny{$\pm$ 15.0} & 57.2\\
    \rowcolor{Gainsboro!60}
    & \textbf{Ours} & 86.2 \tiny{$\pm$ 1.4} & 60.4 \tiny{$\pm$ 2.5} & 69.1 \tiny{$\pm$ 0.2} & 56.5 \tiny{$\pm$ 3.2} & 48.4 \tiny{$\pm$ 17.0} & 38.6 \tiny{$\pm$ 2.8} & 82.5 \tiny{$\pm$ 1.5} & 76.6 \tiny{$\pm$ 1.2} & 64.8\\
    \bottomrule
    \end{tabular}
    \label{tab:random_tokens}
\end{table}

\begin{table}[t]
    \scriptsize
    \centering
    \caption{We test our method on other similar sizes (6-7B) LLMs.}
    \begin{tabular}{ccccccccccccc}
    \toprule
    LLM & Method & SST2 & FPB & COLA & DBpedia & EmoC & EmoS & ETHOS-SO & ETHOS-R & Avg\\
     \hline
    GPT2-l & Random & 77.1 \tiny{$\pm$ 1.2} & 51.3 \tiny{$\pm$ 2.4} & 62.7 \tiny{$\pm$ 0.8} & 54.4 \tiny{$\pm$ 0.9} & 38.7 \tiny{$\pm$ 2.1} & 34.5 \tiny{$\pm$ 1.2} & 67.6 \tiny{$\pm$ 4.3} & 72.9 \tiny{$\pm$ 2.8} & 57.4\\
    \rowcolor{Gainsboro!60}
    & \textbf{Ours} & 86.2 \tiny{$\pm$ 1.4} & 60.4 \tiny{$\pm$ 2.5} & 69.1 \tiny{$\pm$ 0.2} & 56.5 \tiny{$\pm$ 3.2}& 48.4 \tiny{$\pm$ 17.0} & 38.6 \tiny{$\pm$ 2.8} & 82.5 \tiny{$\pm$ 1.5} & 76.6 \tiny{$\pm$ 1.2} & 64.8\\
    GPT3-c & Random & 84.2 \tiny{$\pm$ 1.4} & 52.6 \tiny{$\pm$ 1.8} & 59.1 \tiny{$\pm$ 1.5} & 70.6 \tiny{$\pm$ 0.8} & 44.3 \tiny{$\pm$ 2.5} & 32.3 \tiny{$\pm$ 1.9} & 77.5 \tiny{$\pm$ 4.7} & 77.5 \tiny{$\pm$ 0.6} & 62.3\\
    \rowcolor{Gainsboro!60}
    & \textbf{Ours} & 88.8 \tiny{$\pm$ 0.7} & 64.1 \tiny{$\pm$ 5.7} & 69.0 \tiny{$\pm$ 0.3} & 73.6 \tiny{$\pm$ 2.9} & 50.3 \tiny{$\pm$ 11.9} & 43.1 \tiny{$\pm$ 4.6} & 86.2 \tiny{$\pm$ 0.0} & 78.2 \tiny{$\pm$ 0.0} & 69.2\\
    GPT-J & Random & 78.5 \tiny{$\pm$ 1.0} & 53.1 \tiny{$\pm$ 1.7} & 58.3 \tiny{$\pm$ 2.2} & 55.6 \tiny{$\pm$ 1.2} & 38.5 \tiny{$\pm$ 2.0} & 33.3 \tiny{$\pm$ 1.5} & 76.6 \tiny{$\pm$ 3.7} & 76.6 \tiny{$\pm$ 1.4} & 58.8\\
    \rowcolor{Gainsboro!60}
    & \textbf{Ours} & 87.8 \tiny{$\pm$ 1.9} & 56.7 \tiny{$\pm$ 4.3} & 69.1 \tiny{$\pm$ 0.2} & 60.0 \tiny{$\pm$ 3.6} & 32.5 \tiny{$\pm$ 16.1} & 33.2 \tiny{$\pm$ 2.8} & 85.3 \tiny{$\pm$ 0.5} & 77.0 \tiny{$\pm$ 0.0} & 62.7\\      
    OPT & Random & 72.4 \tiny{$\pm$ 0.8} & 32.8 \tiny{$\pm$ 0.3} & 34.8 \tiny{$\pm$ 0.6} & 29.4 \tiny{$\pm$ 1.4} & 67.1 \tiny{$\pm$ 1.8} & 36.9 \tiny{$\pm$ 0.6} & 86.2 \tiny{$\pm$ 0.0} & 78.2 \tiny{$\pm$ 0.0} & 54.7\\
    \rowcolor{Gainsboro!60}
    & \textbf{Ours} & 74.2 \tiny{$\pm$ 3.0} & 34.1 \tiny{$\pm$ 6.1} & 35.7 \tiny{$\pm$ 3.1} & 28.8 \tiny{$\pm$ 2.1} & 76.7 \tiny{$\pm$ 4.1} & 39.0 \tiny{$\pm$ 3.4} & 86.2 \tiny{$\pm$ 0.0} & 78.2 \tiny{$\pm$ 0.0} & 56.6\\
    LLaMA & Random & 57.7 \tiny{$\pm$ 1.5} & 23.7 \tiny{$\pm$ 1.3} & 30.8 \tiny{$\pm$ 0.2} & 15.8 \tiny{$\pm$ 0.8} & 4.4 \tiny{$\pm$ 0.7} & 35.2 \tiny{$\pm$ 0.7} & 66.2 \tiny{$\pm$ 5.8} & 57.2 \tiny{$\pm$ 5.1} & 36.4\\
    \rowcolor{Gainsboro!60}
    & \textbf{Ours} & 60.5 \tiny{$\pm$ 4.7} & 19.1 \tiny{$\pm$ 1.9} & 30.8 \tiny{$\pm$ 0.2} & 16.9 \tiny{$\pm$ 1.3} & 4.3 \tiny{$\pm$ 0.7} & 35.3 \tiny{$\pm$ 0.6} & 77.2 \tiny{$\pm$ 13.6} & 56.3 \tiny{$\pm$ 10.8} & 37.6\\
    \bottomrule
    \end{tabular}
    \label{tab:other-llm-result}
\end{table}

\begin{table}[t]
    \scriptsize
    \centering
    \caption{We test random selection baseline with anti-causal direction.}
    \begin{tabular}{ccccccccccc}
    \toprule
    LLM & SST2 & FPB & COLA & DBpedia & EmoC & EmoS & ETHOS-SO & ETHOS-R\\
     \hline
    GPT2 & 57.4 \tiny{$\pm$ 1.9} & 56.6 \tiny{$\pm$ 2.1} & 55.9 \tiny{$\pm$ 1.7} & 11.3 \tiny{$\pm$ 1.0} & 24.6 \tiny{$\pm$ 2.4} & 22.1 \tiny{$\pm$ 1.1} & 64.1 \tiny{$\pm$ 4.8} & 58.6 \tiny{$\pm$ 5.5}\\
    GPT2-m & 56.7 \tiny{$\pm$ 1.6} & 48.7 \tiny{$\pm$ 2.1} & 55.3 \tiny{$\pm$ 1.8} & 13.9 \tiny{$\pm$ 1.2} & 22.4 \tiny{$\pm$ 1.9} & 24.9 \tiny{$\pm$ 2.3} & 44.8 \tiny{$\pm$ 1.9} & 45.5 \tiny{$\pm$ 3.5}\\
    GPT2-l & 58.7 \tiny{$\pm$ 0.7} & 33.7 \tiny{$\pm$ 1.3} & 50.8 \tiny{$\pm$ 1.6} & 13.6 \tiny{$\pm$ 1.3} & 28.2 \tiny{$\pm$ 3.6} & 26.2 \tiny{$\pm$ 2.7} & 48.7 \tiny{$\pm$ 3.7} & 53.6 \tiny{$\pm$ 5.3}\\
    GPT2-xl & 54.2 \tiny{$\pm$ 0.5} & 46.8 \tiny{$\pm$ 1.2} & 50.6 \tiny{$\pm$ 1.1} & 12.6 \tiny{$\pm$ 1.5} & 31.4 \tiny{$\pm$ 2.8} & 25.9 \tiny{$\pm$ 3.2} & 65.5 \tiny{$\pm$ 4.9} & 61.8 \tiny{$\pm$ 1.5}\\
    GPT3-a & 55.8 \tiny{$\pm$ 0.9} & 58.9 \tiny{$\pm$ 2.1} & 51.6 \tiny{$\pm$ 1.4} & 14.3 \tiny{$\pm$ 0.8} & 54.2 \tiny{$\pm$ 3.1} & 27.7 \tiny{$\pm$ 1.3} & 49.2 \tiny{$\pm$ 3.3} & 54.9 \tiny{$\pm$ 6.4}\\
    GPT3-b & 64.4 \tiny{$\pm$ 1.6} & 58.9 \tiny{$\pm$ 2.6} & 53.4 \tiny{$\pm$ 1.1} & 14.6 \tiny{$\pm$ 1.1} & 52.0 \tiny{$\pm$ 2.5} & 27.0 \tiny{$\pm$ 1.3} & 48.3 \tiny{$\pm$ 2.7} & 51.0 \tiny{$\pm$ 4.0}\\
    GPT3-c & 78.2 \tiny{$\pm$ 1.6} & 52.3 \tiny{$\pm$ 2.3} & 53.7 \tiny{$\pm$ 0.7} & 23.0 \tiny{$\pm$ 2.5} & 49.1 \tiny{$\pm$ 2.6} & 32.2 \tiny{$\pm$ 1.9} & 57.9 \tiny{$\pm$ 2.7} & 64.1 \tiny{$\pm$ 5.0}\\
    Avg & 60.8 & 50.8 & 53 & 14.8 & 37.4 & 26.6 & 54.1 & 55.6 \\
    \bottomrule
    \end{tabular}
    \label{tab:anti-causal-result}
\end{table}

\begin{table}[t]
    \scriptsize
    \centering
    \caption{We test our method with random words and random labels using GPT2-large.}
    \begin{tabular}{ccccccccccccc}
    \toprule
     & Method & SST2 & FPB & COLA & DBpedia & EmoC & EmoS & ETHOS-SO & ETHOS-R & Avg\\
     \hline
    R words & Random & 54.1 \tiny{$\pm$ 4.2} & 43.4 \tiny{$\pm$ 1.9} & 62.2 \tiny{$\pm$ 4.9} & 11.2 \tiny{$\pm$ 0.9} & 32.4 \tiny{$\pm$ 5.2} & 19.1 \tiny{$\pm$ 1.8} & 80.7 \tiny{$\pm$ 4.8} & 77.0 \tiny{$\pm$ 3.6} & 47.5\\
    \rowcolor{Gainsboro!60}
    & \textbf{Ours} & 50.3 \tiny{$\pm$ 1.3} & 44.9 \tiny{$\pm$ 4.2} & 69.2 \tiny{$\pm$ 0.2} & 13.9\tiny{$\pm$1.2} & 37.8 \tiny{$\pm$ 12.1} & 23.5 \tiny{$\pm$ 7.4} & 86.0 \tiny{$\pm$ 0.5} & 77.9 \tiny{$\pm$ 0.5} & 50.5\\
    R labels & Random & 51.5 \tiny{$\pm$ 0.9} & 32.5 \tiny{$\pm$ 1.2} & 49.3 \tiny{$\pm$ 3.0} & 6.7 \tiny{$\pm$ 1.0} & 25.1 \tiny{$\pm$ 0.6} & 17.2 \tiny{$\pm$ 0.9} & 48.0 \tiny{$\pm$ 2.5} & 56.8 \tiny{$\pm$ 3.1} & 35.9\\
    \rowcolor{Gainsboro!60}
    & \textbf{Ours} & 49.6 \tiny{$\pm$ 0.9} & 36.2 \tiny{$\pm$ 2.5} & 49.3 \tiny{$\pm$ 1.6} & 6.6\tiny{$\pm$ 0.2} & 24.7 \tiny{$\pm$ 0.6} & 16.6 \tiny{$\pm$ 1.0} & 51.0 \tiny{$\pm$ 4.9} & 48.7 \tiny{$\pm$ 3.5} & 35.3\\
    \bottomrule
    \end{tabular}
    \label{tab:random}
\end{table}

\begin{table}[t]
    \scriptsize
    \centering
    \caption{Accuracy using concept tokens as prefixes.}
    \begin{tabular}{cccccccccc}
    \toprule
    SST2 & FPB & COLA & DBpedia & EmoC & EmoS & ETHOS-SO & ETHOS-R\\
     \hline
    90.3 \tiny{$\pm$ 0.0} & 86.1 \tiny{$\pm$ 0.0} & 75.0 \tiny{$\pm$ 0.1} & 92.6 \tiny{$\pm$ 0.6} & 57.3 \tiny{$\pm$ 1.8} & 53.8 \tiny{$\pm$ 0.7} & 86.2 \tiny{$\pm$ 0.0} & 78.2 \tiny{$\pm$ 0.0}\\
    \bottomrule
    \end{tabular}
    \label{tab:prefix}
\end{table}

\begin{table}[t]
    \scriptsize
    \centering
    \caption{$k$ ablation study using GPT2-large, without reordering.}
    \begin{tabular}{ccccccccccccc}
    \toprule
     & Method & SST2 & FPB & COLA & DBpedia & EmoC & EmoS & ETHOS-SO & ETHOS-R & Avg\\
     \hline
    $k=2$ & Random & 74.4 \tiny{$\pm$ 1.0} & 48.5 \tiny{$\pm$ 1.1} & 48.9 \tiny{$\pm$ 1.6} & 52.9 \tiny{$\pm$ 2.0} & 42.8 \tiny{$\pm$ 0.6} & 37.1 \tiny{$\pm$ 1.2} & 66.9 \tiny{$\pm$ 4.7} & 66.4 \tiny{$\pm$ 6.8} & 54.7\\
    \rowcolor{Gainsboro!60}
    & \textbf{Ours} & 78.1 \tiny{$\pm$ 4.5} & 50.1 \tiny{$\pm$ 2.9} & 54.3 \tiny{$\pm$ 8.8} & 57.3 \tiny{$\pm$ 5.1} & 41.1 \tiny{$\pm$ 9.8} & 36.1 \tiny{$\pm$ 2.6} & 84.6 \tiny{$\pm$ 1.6} & 76.8 \tiny{$\pm$ 4.5} & 59.8\\
    $k=4$ & Random & 76.9 \tiny{$\pm$ 0.7} & 56.6 \tiny{$\pm$ 1.1} & 53.1 \tiny{$\pm$ 1.8} & 62.1 \tiny{$\pm$ 1.4} & 38.6 \tiny{$\pm$ 1.4} & 27.7 \tiny{$\pm$ 1.3} & 65.5 \tiny{$\pm$ 5.7} & 74.0 \tiny{$\pm$ 3.0} & 56.8\\
    \rowcolor{Gainsboro!60}
    & \textbf{Ours} & 86.2 \tiny{$\pm$ 1.4} & 59.7 \tiny{$\pm$ 2.8} & 69.1 \tiny{$\pm$ 0.2} & 56.5 \tiny{$\pm$ 3.2} & 38.2 \tiny{$\pm$ 21.8} & 37.7 \tiny{$\pm$ 2.5} & 83.0 \tiny{$\pm$ 1.3} & 76.6 \tiny{$\pm$ 1.2} & 63.4\\
    $k=8$ & Random & 79.9 \tiny{$\pm$ 0.2} & 57.1 \tiny{$\pm$ 1.6} & 51.3 \tiny{$\pm$ 1.0} & 66.5 \tiny{$\pm$ 1.2} & 37.6 \tiny{$\pm$ 1.5} & 36.2 \tiny{$\pm$ 0.6} & 68.5 \tiny{$\pm$ 3.5} & 72.9 \tiny{$\pm$ 3.3} & 58.8\\
    \rowcolor{Gainsboro!60}
    & \textbf{Ours} & 87.0 \tiny{$\pm$ 2.4} & 59.9 \tiny{$\pm$ 3.3} & 55.3 \tiny{$\pm$ 9.7} & 67.0 \tiny{$\pm$ 0.9} & 39.9 \tiny{$\pm$ 5.3} & 38.8 \tiny{$\pm$ 2.6} & 77.0 \tiny{$\pm$ 11.1} & 78.9 \tiny{$\pm$ 0.9} & 63\\
    $k=16$ & Random & 79.9 \tiny{$\pm$ 1.1} & 54.9 \tiny{$\pm$ 2.7} & 54.5 \tiny{$\pm$ 2.8} & 69.1 \tiny{$\pm$ 1.1} & 33.7 \tiny{$\pm$ 2.2} & 33.5 \tiny{$\pm$ 1.4} & 64.8 \tiny{$\pm$ 4.0} & 69.0 \tiny{$\pm$ 3.2} & 57.4\\
    \rowcolor{Gainsboro!60}
    & \textbf{Ours} & 84.6 \tiny{$\pm$ 1.9} & 60.4 \tiny{$\pm$ 6.4} & 62.0 \tiny{$\pm$ 7.0} & 71.0 \tiny{$\pm$ 1.9} & 37.2 \tiny{$\pm$ 6.1} & 37.1 \tiny{$\pm$ 2.2} & 72.4 \tiny{$\pm$ 7.6} & 74.7 \tiny{$\pm$ 4.7} & 62.4\\
    \bottomrule
    \end{tabular}
    \label{tab:k-ablation}
\end{table}

\begin{table}[t]
    \scriptsize
    \centering
    \caption{$c$ ablation study using GPT2-large}
    \begin{tabular}{ccccccccccccc}
    \toprule
     & SST2 & FPB & COLA & DBpedia & EmoC & EmoS & ETHOS-SO & ETHOS-R & Avg\\
     \hline
    $c=5$ & 78.9 \tiny{$\pm$ 2.4} & 59.8 \tiny{$\pm$ 10.8} & 34.3 \tiny{$\pm$ 5.0} & 62.9 \tiny{$\pm$ 2.4} & 44.9 \tiny{$\pm$ 9.5} & 38.1 \tiny{$\pm$ 2.4} & 71.7 \tiny{$\pm$ 5.9} & 62.1 \tiny{$\pm$ 19.7} & 56.6\\
    $c=10$ & 85.4 \tiny{$\pm$ 1.7} & 61.9 \tiny{$\pm$ 10.5 } & 58.2 \tiny{$\pm$ 7.0} & 64.0 \tiny{$\pm$ 4.4} & 43.0 \tiny{$\pm$ 7.2} & 37.9 \tiny{$\pm$ 2.3} & 84.4 \tiny{$\pm$ 1.4} & 78.9 \tiny{$\pm$ 0.9} & 64.2\\
    $c=15$ & 80.1 \tiny{$\pm$ 1.4} & 64.3 \tiny{$\pm$ 7.7} & 63.1 \tiny{$\pm$ 9.4} & 58.7 \tiny{$\pm$ 3.2} & 36.4 \tiny{$\pm$ 11.5} & 38.6 \tiny{$\pm$ 1.9} & 80.9 \tiny{$\pm$ 3.9} & 76.3 \tiny{$\pm$ 5.9} & 62.3\\
    $c=20$ & 78.5 \tiny{$\pm$ 4.1} & 51.8 \tiny{$\pm$ 8.0} & 66.5 \tiny{$\pm$ 2.3} & 58.0 \tiny{$\pm$ 3.4} & 36.3 \tiny{$\pm$ 4.3} & 41.8 \tiny{$\pm$ 5.8} & 80.7 \tiny{$\pm$ 4.5} & 73.8 \tiny{$\pm$ 5.4} & 60.92\\
    \bottomrule
    \end{tabular}
    \label{tab:c-ablation}
\end{table}

\begin{table}[t]
    \scriptsize
    \centering
    \caption{Reorder versus not reorder using our method, with GPT2-large.}
    \begin{tabular}{cccccccccccc}
    \toprule
     & SST2 & FPB & COLA & DBpedia & EmoC & EmoS & ETHOS-SO & ETHOS-R & Avg\\
     \hline
    reorder & 86.2 \tiny{$\pm$ 1.4} & 60.4 \tiny{$\pm$ 2.5} & 69.1 \tiny{$\pm$ 0.2} & 56.5 \tiny{$\pm$ 3.2} & 48.4 \tiny{$\pm$ 17.0} & 38.6 \tiny{$\pm$ 2.8} & 82.5 \tiny{$\pm$ 1.5} & 76.6 \tiny{$\pm$ 1.2} & 64.8\\
    not reorder & 86.2 \tiny{$\pm$ 1.4} & 59.7 \tiny{$\pm$ 2.8} & 69.1 \tiny{$\pm$ 0.2} & 56.5 \tiny{$\pm$ 3.2} & 38.2 \tiny{$\pm$ 21.8} & 37.7 \tiny{$\pm$ 2.5} & 83.0 \tiny{$\pm$ 1.3} & 76.6 \tiny{$\pm$ 1.2} & 63.4\\
    \bottomrule
    \end{tabular}
    \label{tab:reorder}
\end{table}

\textbf{Similar tokens.} We show the top ten similar tokens to some learned concept tokens in \cref{tab:similar-tokens}, as summarized in \cref{fig:tsne} in the main text.

\begin{table}[t]
    \scriptsize
    \centering
    \caption{We list the top 10 similar words (tokens) to some of the learned concept tokens.}
    \begin{tabular}{cc}
    \toprule
    concept token & similar words\\
     \hline
    FPB-2 & milo coordinate notify rendering benefiting routing EntityItem routed Messages Plot \\
    FPB-3 & unlocked updating deleting dropping damage updates drops Gained taken dropped \\
    FPB-4 & FX Safari Fixes advertisers Links Coins Operator marketers Guidelines \\
    FPB-5 & 674 592 693 696 498 593 793 504 691 683 \\
    COLA-1 & exha trunc curv fragmented elong iterator initialized bounds Iter filament \\
    COLA-2 & Sp spa contributed cerv borrower paper tiger Erica USH Schwartz \\
    COLA-7 & democr Barack WH ophobic neum Democrats Rachel WH Democrats \\
    DBpedia-4 & often impede blockade incarcerated LEASE pollutants pesticides uphe lawmakers fossils \\
    DBpedia-5 & categorized closes therapies antidepressant retrospective clinically physicians therapists randomized clinicians \\
    DBpedia-7 & JS provided Killed richness Compet Nevertheless Probably Proceedings horizontally \\
    ETHOS-SO-3 & Revolution Spread itu Million Pascal stabil Indy Georgian Figure resy \\
    ETHOS-R-2 & council Chocobo Shant uyomi aditional cumbers subur ThumbnailImage araoh Pharaoh \\
    ETHOS-R-8 & seems outlines emitted grin outline circuitry sized flips emits flipped \\
    ETHOS-R-9 & 223 asel Cyrus Sith Scorpion Snape Jas Leia Ned Morty \\
    EmoC-6 & behavi checkpoints unintention crib eleph looph np mosquit blat pione \\
    EmoC-8 & depressed bullied choked stricken devastated unsuccessful cheated distraught troubled failing \\
    EmoS-1 & frightened rebellious depressed careless bullied restless reluctant distraught clumsy disgruntled \\
    EmoS-5 & obsessive crappy demonic delusions psychosis psychotic childish stupidity reckless insanity \\
    EmoS-7 & benevolent charismatic perfected volunte unintention pione innocuous fearless glamorous ruthless \\
    EmoS-9 & whispers pundits Sadly horribly curiously noticeably Sadly gaping painfully shockingly \\
    \bottomrule
    \end{tabular}
    \label{tab:similar-tokens}
\end{table}

\textbf{Likelihood histogram.} We also show histograms of the probability of each example predicting corresponding concept tokens in different datasets. We can see that the probability of prediction concept tokens can well differentiate examples in a dataset. 

\begin{figure}[t]
\centering
\begin{subfigure}[b]{.4\textwidth}
    \centering
    \includegraphics[width=\textwidth]{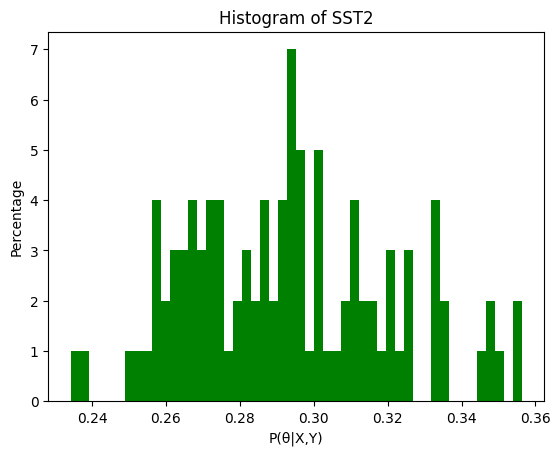}
    \caption{SST2}\label{fig:sst2}
\end{subfigure}
    \hfill
\begin{subfigure}[b]{.4\textwidth}
    \centering
    \includegraphics[width=\textwidth]{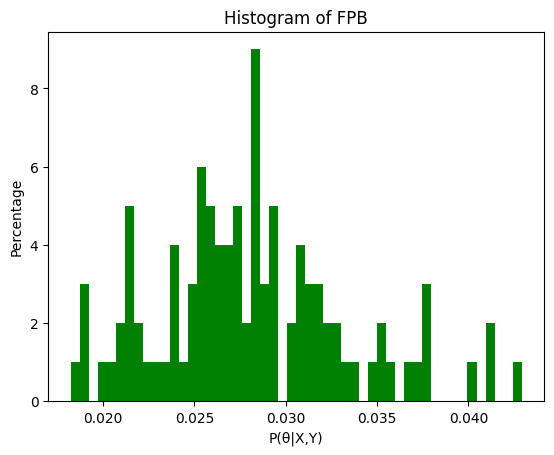}
    \caption{FBP}\label{fig:fbp}
\end{subfigure}

\bigskip
\begin{subfigure}[b]{.4\textwidth}
    \centering
    \includegraphics[width=\textwidth]{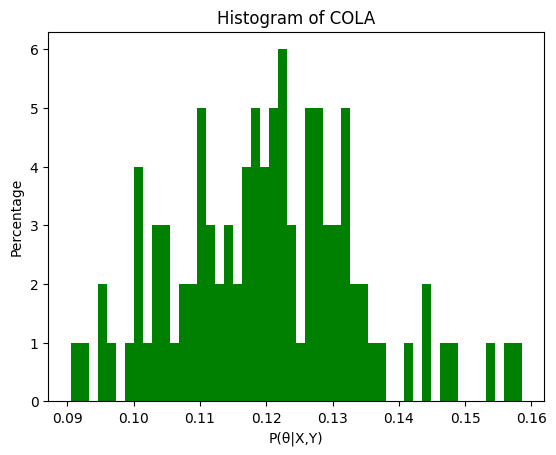}
    \caption{COLA}\label{fig:cola}
\end{subfigure}
    \hfill
\begin{subfigure}[b]{.4\textwidth}
    \centering
    \includegraphics[width=\textwidth]{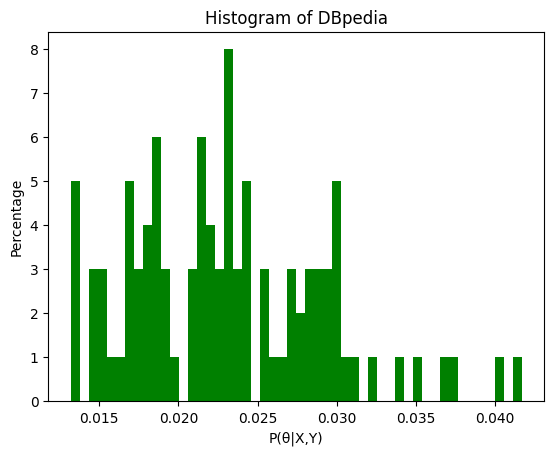}
    \caption{DBpedia}\label{fig:dbpedia}
\end{subfigure}

\bigskip
\begin{subfigure}[b]{.4\textwidth}
    \centering
    \includegraphics[width=\textwidth]{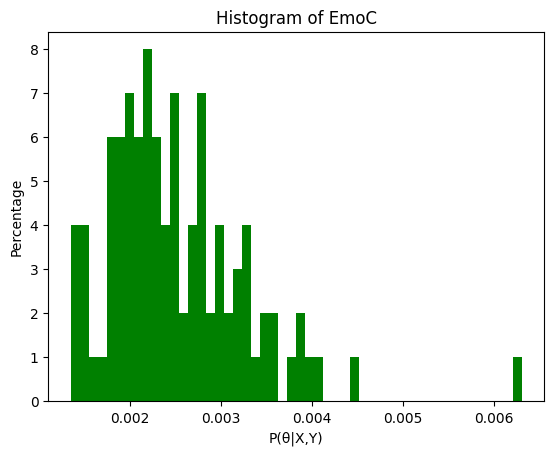}
    \caption{EmoC}\label{fig:emoc}
\end{subfigure}
    \hfill
\begin{subfigure}[b]{.4\textwidth}
    \centering
    \includegraphics[width=\textwidth]{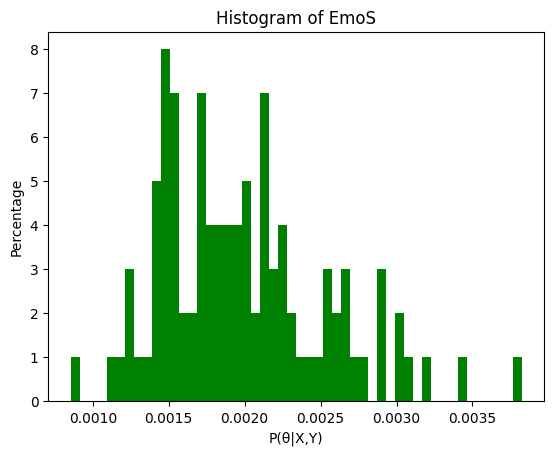}
    \caption{EmoS}\label{fig:emos}
\end{subfigure}

\bigskip
\begin{subfigure}[b]{.4\textwidth}
    \centering
    \includegraphics[width=\textwidth]{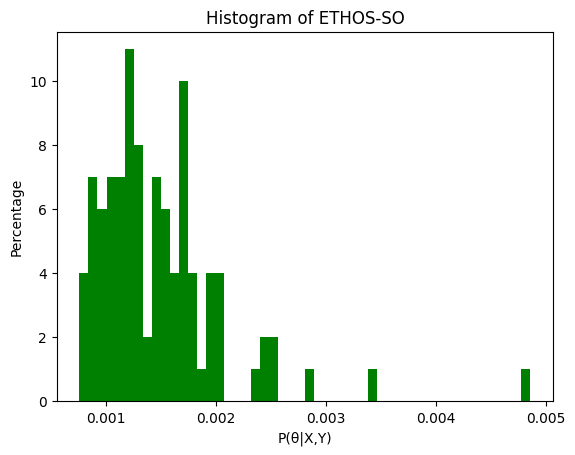}
    \caption{ETHOS-SO}\label{fig:ethos-so}
\end{subfigure}
    \hfill
\begin{subfigure}[b]{.4\textwidth}
    \centering
    \includegraphics[width=\textwidth]{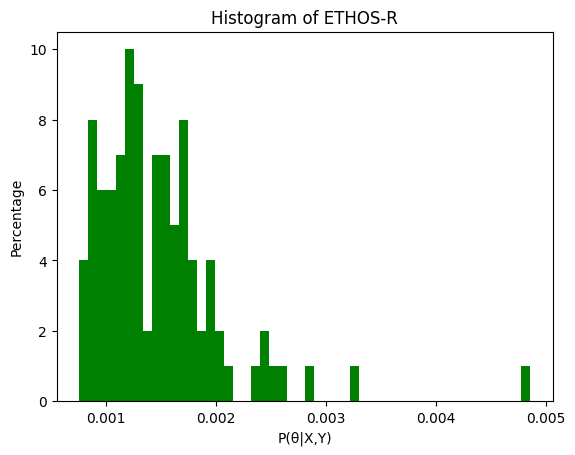}
    \caption{RTHOS-R}\label{fig:ethos-r}
\end{subfigure}
\caption{Historgrams of the probability of train examples in each dataset predicting corresponding concept tokens.}
\label{fig:dist}
\end{figure}

\textbf{Selected demonstrations.} \cref{tab:example_data} shows the selected top 4 demonstration by our proposed algorithm.

\begin{longtable}{p{0.2\linewidth} p{0.8\linewidth}} 
    \toprule
    Task & Selected demonstrations \\
    \midrule
    GSM8K & \texttt{Question: It takes Bryan 5 minutes to walk from his house to the bus station. Then he rides the bus for 20 minutes. After that, he walks 5 minutes from the bus station to his job. It takes the same amount of time in the morning and the evening. How many hours per year does Bryan spend traveling to and from work, if he works every day? Bryan spends 5+20+5 =<<5+20+5=30>>30 minutes traveling to work. He does this twice a day, so he spends 30*2=<<30*2=60>>60 minutes traveling per day. Thus, he spends 60/60=<<60/60=1>>1 hour traveling to and from work every day. Thus, he spends 1 hour*365 days =<<1*365=365>>365 hours traveling per year. The answer is: 365} \\
    \midrule
    GSM8K & \texttt{Question: Cherry put up a delivery service. She charges \$2.50 for a 3-5 kilograms cargo and \$4 for a 6-8 kilograms cargo. If she delivers four 5 kilograms cargo and two 8 kilograms cargo per day, how much money will she earn in a week? Cherry earns \$2.50 x 4= \$<<2.5*4=10>>10 after delivering four 5 kilograms of cargo per day. She earns \$4 x 2= \$<<4*2=8>>8 after delivering two 8 kilograms of cargo per day. So, her total earning per day is \$8+\$10= \$<<8+10=18>>18. Therefore, she will earn \$18 x 7= \$<<18*7=126>>126 in a week. The answer is: 126}\\
     \midrule
    GSM8K & \texttt{Question: Bill is laying power cable for a new neighborhood. There are going to be 18 east-west streets that are 2 miles long and 10 north-south streets that are four miles long. It takes 5 miles of cable to electrify 1 mile of street. If cable costs \$2000/mile, what is the total cost of cable for the neighborhood? First find the total distance of the east-west streets: 18 streets * 2 miles/street = <<18*2=36>>36 miles. Then find the total distance of the north-south streets: 10 streets * 4 miles/street = <<10*4=40>>40 miles. Then add the number of miles from each type of street to find the total distance: 36 miles + 40 miles = <<36+40=76>>76 miles. Then multiply that number by 5 to find the number of miles of cable needed: 76 miles street * 5 miles cable/mile street = <<76*5=380>>380 miles of cable. Then multiply that number by the cost of one mile of cable to find the total cost: 380 miles * \$2000/mile = \$<<380*2000=760000>>760,000. The answer is: 760000} \\
     \midrule
    GSM8K & \texttt{Question: John buys a gaming PC for \$1200.  He decides to replace the video card in it. He sells the old card for \$300 and buys a new one for \$500.  How much money did he spend on his computer, counting the savings from selling the old card? He spent an extra 500-300=\$<<500-300=200>>200 on the video card. That means the total cost was 1200+200=\$<<1200+200=1400>>1400. The answer is: 1400} \\
    \midrule
    SST2 & \texttt{sentence: faced and spindly attempt at playing an ingenue makes her nomination as best actress even more of a an a positive} \\
    \midrule
    SST2 & \texttt{sentence: holofcener's film offers just enough insight to keep it from being simpleminded, and positive} \\
    \midrule
    SST2 & \texttt{sentence: i'm not a fan of the phrase ` life affirming' because it usually means ` schmaltzy,' but real women have curves truly is life affirming negative} \\
    \midrule
    SST2 & \texttt{sentence: the script is about as interesting as a recording of conversations at the wal-mart checkout line negative} \\
    \midrule
    DBpedia & \texttt{OfficeHolder Lucie Papin (born September 7 1936) is a former Canadian politician who served in both the House of Commons and Senate.} \\
    \midrule
    DBpedia & \texttt{Village Kunkalamarru is very renowned village under Karamchedu Mandal which is located about 15 km from the busy commercial town of Chirala in Prakasam district in the state of Andhra Pradesh India.Its neighbouring villages are Karamchedu Veerannapalem.} \\
    \midrule
    DBpedia & \texttt{EducationalInstitution The Pontifical Catholic University of Puerto Rico at Mayagez is a university located in the city of Mayagez Puerto Rico. It is part of the Pontifical Catholic University of Puerto Rico. The university began as an extension of the Catholic University of Puerto Rico in the early 1960s. In 1982 it was awarded the official title of Center and later it became the Mayagez Campus of the Pontifical Catholic University of Puerto Rico at Mayagez in 1996.} \\
    \midrule
    DBpedia & \texttt{Artist Choi Dong-wook [citation needed]; born November 9 1984) better known by his stage name Se7en is a South Korean singer from YG Entertainment. He has also advanced into Japan China and the United States.}\\
    \bottomrule
\caption{Selected demonstrations by our method.}
\label{tab:example_data}
\end{longtable}

\section{Limitations and Future Work}
\label{app:limit}

While the assumption that a large language model captures the true distribution of language is fairly common in the literature studying LLMs \cite{xie2022an,saunshi2021a}, this assumption is not entirely accurate in practice. According to \cite{lebrun2022evaluating}, LLMs systematically underestimate rare text sequences, which constitute a significant portion of the long-tail distribution of language. Although this assumption is adequate to achieve favorable empirical results, it is expected that more accurate language models will, in theory, lead to improved outcomes. 

The selection of the accompanying diverse tasks $\mathcal{S}$ is currently left to the user's discretion. A better approach to constructing such a task set is needed to gain a deeper understanding of latent concept variables and to improve the latent concept learning algorithm. 

Our algorithm currently only applies to classification tasks. More complex latent variables could be designed to improve the in-context learning performance of more complex tasks like math word questions and logical reasoning problems.

\section{Broader Impact}
\label{app:social}

The utilization of language models (LLMs) for specific tasks is often hindered by the high cost associated with training or fine-tuning them. However, the in-context learning paradigm offers a cost-effective and convenient alternative for utilizing the power of pre-trained LLMs. Our work has demonstrated a significant improvement in the performance of in-context learning through a relatively low-cost and simple approach, thus making the use of LLMs more accessible for individuals with limited resources.

However, it is important to consider the broader implications of the increasing use of LLMs. As LLMs are not infallible and may make mistakes, it is crucial to explicitly warn users of the potential for misleading output and to regulate the distribution of LLMs in order to prevent any negative societal impact. Additionally, it is possible that LLMs could be intentionally misused, thus it is important to consider the ethical implications of their use and to take appropriate measures to mitigate any potential negative effects. We posit that these regulations and measures should be put in place at the time of distributing LLMs to ensure the safe and responsible use of these models. Furthermore, as we publicly release our code, we will also provide clear warnings and guidelines to users to ensure that the potential risks associated with the use of our method are fully understood and addressed.

\end{document}